\documentclass{article}

\usepackage{microtype}
\usepackage{graphicx}
\usepackage{subcaption}
\usepackage{booktabs} % for professional tables
\usepackage[T1]{fontenc}

\usepackage{thmtools}
\usepackage{thm-restate}

\usepackage{hyperref}

\usepackage[accepted]{icml2026}

\usepackage{amsmath}
\usepackage{amssymb}
\usepackage{mathtools}
\usepackage{amsthm}

\usepackage[capitalize,noabbrev]{cleveref}

\theoremstyle{plain}
\newtheorem{theorem}{Theorem}[section]
\newtheorem{proposition}[theorem]{Proposition}
\newtheorem{lemma}[theorem]{Lemma}

\theoremstyle{definition}

\theoremstyle{remark}

\theoremstyle{plain}

\newtheorem*{lemma*}{Lemma}
\newtheorem*{proposition*}{Proposition}
\newtheorem*{theorem*}{Theorem}
\newtheorem*{remark*}{Remark}

\newcommand{\R}{\mathbb{R}}
\newcommand{\I}{\mathbb{I}}
\newcommand{\expectation}[2]{\mathbb{E}_{#1}\left[#2\right]}

\newcommand{\data}{\mathcal{D}}

\newcommand{\likelihood}{\sigma^2}
\newcommand{\prior}{\alpha^{-1} \mathbb{I}}
\newcommand{\precisionprior}{\alpha \mathbb{I}}
\newcommand{\posterior}{\Sigma}

\newcommand{\KL}[2]{D_{KL}\left(#1||#2\right)}

\newcommand{\Tr}[1]{\text{Tr}\left(#1\right)}

\newcommand{\temp}{^{\frac{1}{T}}}

\newcommand{\deltamin}{\delta_{min}}

\crefname{appendix}{appendix}{appendices}
\Crefname{appendix}{Appendix}{Appendices}

\usepackage[textsize=tiny]{todonotes}

\icmltitlerunning{Gaussian MFVI can overestimate predictive variance}

\begin{document}

\twocolumn[
  
\icmltitle{Gaussian Mean Field Variational Inference can Overestimate Predictive Variance}
  % It is OKAY to include author information, even for blind submissions: the
  % style file will automatically remove it for you unless you've provided
  % the [accepted] option to the icml2026 package.

  % List of affiliations: The first argument should be a (short) identifier you
  % will use later to specify author affiliations Academic affiliations
  % should list Department, University, City, Region, Country Industry
  % affiliations should list Company, City, Region, Country

  % You can specify symbols, otherwise they are numbered in order. Ideally, you
  % should not use this facility. Affiliations will be numbered in order of
  % appearance and this is the preferred way.
  \icmlsetsymbol{equal}{*}

  \begin{icmlauthorlist}
    \icmlauthor{James Odgers}{utn,h,mcml}
    \icmlauthor{Ben Riegler}{tum,h,mcml}
    \icmlauthor{Siddharth Swaroop}{ucl}
    \icmlauthor{Vincent Fortuin}{utn,h,mcml}
  \end{icmlauthorlist}

  \icmlaffiliation{utn}{University of Technology Nuremberg (UTN)}
  \icmlaffiliation{h}{Helmholtz AI}
  \icmlaffiliation{mcml}{Munich Center for Machine Learning (MCML)}
  \icmlaffiliation{tum}{Technische Universität München (TUM)}
  \icmlaffiliation{ucl}{University College London (UCL)}

  \icmlcorrespondingauthor{James Odgers}{james.odgers@utn.de}

  % You may provide any keywords that you find helpful for describing your
  % paper; these are used to populate the "keywords" metadata in the PDF but
  % will not be shown in the document
  \icmlkeywords{Variational Inference, Mean Field Variational Inference, MFVI, Bayesian Linear Regression}

  \vskip 0.3in
]

% this must go after the closing bracket ] following \twocolumn[ ...

% This command actually creates the footnote in the first column listing the
% affiliations and the copyright notice. The command takes one argument, which
% is text to display at the start of the footnote. The \icmlEqualContribution
% command is standard text for equal contribution. Remove it (just {}) if you
% do not need this facility.

% Use ONE of the following lines. DO NOT remove the command.
% If you have no special notice, KEEP empty braces:
\printAffiliationsAndNotice{}  % no special notice (required even if empty)
% Or, if applicable, use the standard equal contribution text:
% \printAffiliationsAndNotice{\icmlEqualContribution}

\begin{abstract}
Mean Field Variational Inference (MFVI) is widely understood to underestimate posterior variance. By analysing conjugate Bayesian Linear Regression (BLR), we show that this characterization is incomplete: while MFVI underestimates the variance in parameter space, it can overestimate the predictive variance compared to the exact posterior. We show that if the MFVI posterior underestimates predictive variances in some directions, it necessarily overestimates them in others. Crucially, this overestimation occurs in directions where the training data concentrates. This leads to the surprising result that, for a test point drawn from the training distribution, MFVI's expected predictive variance exceeds that of the exact posterior. We demonstrate a pathological case of this effect, where the  MFVI posterior fails to reduce predictive variance compared to the prior on in distribution data. We connect these results to the Cold Posterior Effect, arguing that varying the temperature can correct this overestimation, yielding predictions closer to those of the exact posterior. We validate our theory on synthetic and real-world regression tasks.
\end{abstract}

\section{Introduction}

\begin{figure}[t!]
    \centering
    \begin{subfigure}[t]{0.48\linewidth}
        \centering
        \includegraphics[height=1.5in]{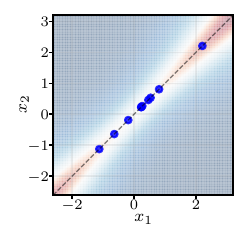}
        \subcaption{Input space}
        \label{fig:toy-input-space}
    \end{subfigure}
    \begin{subfigure}[t]{0.48\linewidth}
        \centering
        \includegraphics[height=1.5in]{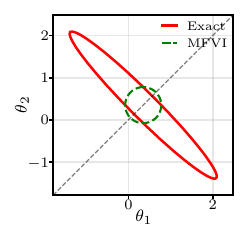}
        \subcaption{Parameter space}
        \label{fig:toy-parameter-space}
    \end{subfigure}%
    \caption{
    Although the MFVI posterior underestimates variance on average, it overestimates variance along the direction in which the data lie. Here we show this on a simple 2D linear regression, where the input data is restricted to lie on the subspace $x_1=x_2$ (\cref{fig:toy-input-space}). We see that, for most of the input space, MFVI predictive uncertainty is less than the exact posterior predictive (blue region). At the same time, along the data direction, the opposite is true (red region). 
    This is also the case when looking at the weight-space posterior (\cref{fig:toy-parameter-space}), with the exact posterior clearly taking up more volume, but also with the MFVI posterior having a larger variance along the crucial $\theta_1 = \theta_2$ subspace. 
    %This figure illustrates the relationship between the exact posterior and the MFVI posterior in a simple 2D linear regression where the input data is restricted to lie on the subspace $x_1=x_2$ . \cref{fig:toy-parameter-space} shows the 95\% credible elipse of the exact posterior and the MFVI posterior, with the exact posterior clearly taking up more volume, but also with the MFVI posterior clearly having a larger variance along the crucial $\theta_1 = \theta_2$ subspace. \cref{fig:toy-input-space} shows the input space, with the input locations of data that produced these posteriors shown as blue points. The background color is given by $R(x)$, which is the ratio of the predicted variance of the MFVI posterior to the exact posterior, with the points where $R(x)=1$ being indicated by green dashed lines. While in the majority of the input space the exact posterior has a higher uncertainty (blue background), the subspace in which the data lies is all in the region where the MFVI posterior has higher uncertainty (red background). 
    }
    \label{fig:toy-illustration}
\end{figure}
Bayesian approaches to machine learning offer compelling advantages, including a principled approach to uncertainty estimation, and learning hyperparameters without overfitting \cite{mackay1992bayesian}.
Unfortunately, the full Bayesian posterior, $p(\theta| \data)$, is usually intractable and needs to be approximated. 
One of the most popular approaches to approximate this posterior is Variational Inference (VI) \cite{jordan1999introduction, blei2017variational}. Here, a family of distributions is proposed, $\mathcal{Q}$, from which the distribution $q^*(\theta)$ is optimized, which minimises the reverse KL divergence to the true posterior, $q^*(\theta) = \arg \min_{q\in Q} \KL{ q(\theta)}{p(\theta|\data)}$. Practitioners often use Mean Field Variational Inference (MFVI), in which the variational posterior factorises across parameters, $q(\theta) = \prod_{p=1}^P q_{p}(\theta_p) $. In practice, the marginal distributions are often restricted to be Gaussian, so  $q_{p}(\theta_p)=N(\mu_p, \sigma^2_p)$. We study this popular setting in this paper. 

MFVI is typically considered to \textit{underestimate} the uncertainty of the posterior \cite{minka2005Divergencemeasuresmessage, Turner_Sahani_2011, blei2017variational}. This is because it is a mode-fitting approximation in parameter space: the approximate posterior avoids assigning probability mass to regions with low true probability, thereby underestimating the variance in parameter space \cite{margossian2023shrinkage}. However, in this paper, we argue that this picture is incomplete once the goal is prediction rather than parameter inference. In particular, the predictions from an MFVI posterior can \textit{overestimate} uncertainty on In-Distribution (ID) data.

%, where the approximate posterior is penalized more heavily for assigning probability mass to areas with a low true posterior probability than for assigning low approximate posterior probability to regions which have high true probability \todo{check terminology and cite this sentence}. This results in posteriors which can, in various ways be considered to be overconfident compared to the true posterior \todo{cite Charles's paper}. 
% We provide a toy illustration for this in \cref{fig:toy-parameter-space}, which shows the 95\% confidence ellipse of the exact posterior and the MFVI posterior, and clearly shows the exact posterior confidence ellipse occupies a larger volume of the parameter space than the MFVI posterior confidence ellipse. 

\paragraph{Illustrative example (\cref{fig:toy-illustration}).} \cref{fig:toy-input-space} shows some simple training data restricted to the subspace $x_1=x_2$, and \cref{fig:toy-parameter-space} shows the exact posterior and MFVI posterior (in parameter space) for this linear regression example. 
%\cref{fig:toy-parameter-space} shows the exact posterior and MFVI posterior for a linear regression example, and \cref{fig:toy-input-space} shows the input location of the training data which produced this posterior. These training points are restricted to the subspace $x_1=x_2$, shown by the dashed line. 
The highly uneven, non-axis-aligned covariance of the training inputs produces a posterior with a strong covariance between $\theta_1$ and $\theta_2$ in the parameter space. %, and thus a large difference between the MFVI and true posteriors. 
We can see how MFVI appears to underestimate the variance in general: its mass is over a smaller region of the parameter space than the mass of the exact posterior. 
However, in this example, we are not interested in the parameters learned by the model; instead, we care about the predictions made on test data. 
A realistic assumption is often that future data points are similar to training data points, such that $x_1 = x_2$. 
If this is the case, the only direction in parameter space relevant for predictions is $\theta_1 = \theta_2$, as illustrated by the dashed line in \cref{fig:toy-parameter-space}. 
In this direction, the MFVI posterior \textit{overestimates} the variance compared to the true posterior. Our main result, given in Theorem~\ref{th:empirical_pred_var}, is to show that the overestimation of predictive variance in MFVI in Bayesian Linear Regression (BLR) is very general. 
% Specifically, if an isotropic Gaussian prior is used, then the MFVI posterior predictive variance will overestimate the exact posterior predictive variance for the training data.

\paragraph{A Cold Posterior Effect without model mismatch. } The Cold Posterior Effect (CPE) from Bayesian Deep Learning (BDL) \cite{wenzel2020good}, describes an effect where artificially reducing the uncertainty of an approximate posterior improves predictive performance. In this work, we add another justification for why we may expect the CPE: lowering the temperature corrects for overestimated predictive variance from MFVI on in-distribution data,  producing predictions which match the exact posterior predictions more closely. We demonstrate that the reverse effect is true for Out-Of-Distribution (OOD) data, where a warm posterior matches the exact posterior predictive distribution more closely and performs better on predictive tasks.

\paragraph{Contributions.} In this paper, our main contributions are:
\begin{itemize}
    \item We provide a theoretical analysis of conjugate Bayesian Linear regression, showing that the MFVI posterior overestimates the exact posterior variance along the first principal component of the training data, and that the average predicted variance at the training data points is higher for MFVI than the exact posterior. 
    \item We theoretically and empirically examine the predictive performance of MFVI on a pathological setup and demonstrate that in the high-dimensional limit, the MFVI posterior can fail to reduce the predictive variance at all compared to the prior.
    \item We show theoretically and empirically that both a Cold Posterior Effect and Warm Posterior Effect can occur in linear regression models approximated with MFVI, and argue that both of these effects can be understood as scaling the predictive variance of the MFVI approximation to better match the exact posterior predictions. 
\end{itemize}

\section{Related Work}
\label{sec:background}

\textbf{Variational Inference (VI).}
VI research has mostly focused on the approximation quality in parameter space, not the approximation quality on predictions (see \citet{blei2017variational} for an example of this). 
One line of related research is Functional VI \cite{sun2019functional, burt2020understanding, cinquin2024regularized}, to allow more interpretable priors, such as GPs, to be applied to BNNs. In our linear regression setting, there is a one-to-one mapping between functions and parameters, so there is no difference in the functional and parametric variational objectives \citep[see Proposition 3,][]{burt2020understanding}.
A functional approach to VI is also common in Gaussian Processes \citep{titsias2009variational} to improve computational efficiency. We do not consider these approaches in this paper, as these approaches are equivalent to different variational families in parameter space which we do not study here.
The paper closest to the theoretical analysis we provide is \citet{margossian2023shrinkage}, who studied the theoretical properties of the Gaussian MFVI posterior when the exact posterior is a Multivariate Gaussian. Crucially, this current work focuses on the properties of the predictions of VI, whereas \citet{margossian2023shrinkage} focus on the properties in parameter space. 

\textbf{Cold Posterior Effect (CPE).}
There are several works proposing explanations for the CPE, showing how this effect can naturally arise from data augmentation \citep{izmailov2021bayesian, nabarro2022data, bachmann2022tempering}, misspecified likelihoods \citep{adlam2020cold, aitchison2020statistical}, misspecified priors \cite{fortuin2021bayesian, kapoor2022uncertainty, marek2024can}, or a combination of effects \citep{zeno2020cold, noci2021disentangling}. 
We do not disagree with these causes; our paper provides an additional novel explanation, in which the model has a well-specified likelihood and prior, and no data augmentation is used. Instead, we propose that the CPE allows the MFVI approximation to more closely match the exact posterior predictions for certain tasks. %\citet{wilson2022evaluating} considers which methods of approximate inference most closely match full-batch HMC sampled predictions of BNNs, and found that a  VI method \cite{shen2024variational} performed the best, but did not explicitly consider temperature and did not perform a theoretical analysis. 
\citet{zhang2024cold} argue that the CPE is correcting for underconfident predictions, which we agree with and extend by providing a mechanistic understanding of this effect with MFVI.
Many MFVI algorithms in deep learning temper their likelihood function, but do not provide any theoretical analysis or reasoning \citep[e.g.,][]{osawa2019practical,ashman2022partitioned}. We became aware of contemporaneous work by \citet{harvey2025learning} showing that the standard MFVI-ELBO objective produces underconfident predictions, though their focus is on hyper-parameter learning, which we hold fixed.

\textbf{GenBayes.}
Generalized Bayes centres around the idea that traditional Bayesian inference methods may not give models the best predictive properties; a tempering strategy \cite{aitchison2020statistical}, in which the likelihood and the prior regularisation terms of the VI objective are weighted differently, can produce better predictions \cite{pitas2022cold, mclatchie2025predictive}. Our work is distinct from tempered objectives. Instead, we consider an objective approximating an exact cold posterior, which cannot be written with a tempered objective, as discussed in detail in Section~\ref{sec:cold_posterior}.

\section{MFVI Overestimates Predictive Uncertainty for the Empirical Distribution}
\label{sec:theory}
\paragraph{Problem Statement.}

Assume we have a model where inputs $x\in \R^{P} $ and outputs $y\in \R$ are related by the linear predictor $\theta \in \R^P$, such that
\begin{equation} \label{eq:linear-regression}
    y =  \theta^\top x + \epsilon, \quad \epsilon \in \R \sim N(0,\likelihood ).
\end{equation}
We assume that we have a training dataset of the form $\data = \{x_n, y_n\}_{n=1}^N = \{X \in \R^{N \times P }, Y \in \R^{N}\}$ drawn from this distribution. We will call the empirically observed distribution of inputs $\hat p(x) = \frac{1}{N}\sum_{n=1}^N \delta(x-x_n)$, and assume that the training data was mean-centred, so $\expectation{\hat p(x)}{x } = 0$. Throughout this paper, we will assume that the prior, $\theta \sim N(0, \prior)$, and likelihood, $\likelihood$, are well specified and known to us.

\paragraph{True Posterior.}

In this work, we are interested in comparing to the true posterior 
\begin{equation}\label{equ:true_post}
    p(\theta | \data) = N(\mu, \posterior), 
\end{equation}
\begin{equation*}
    \posterior = \left(\frac{1}{\likelihood}X^\top X + \precisionprior\right)^{-1}, \; \mu = \Sigma \left(\frac{1}{\likelihood}X^\top Y\right).
\end{equation*}

Specifically, we are interested in comparing the predictions for the output, $y$, for a given test point, $x$, from the true posterior and approximate posteriors. 

\paragraph{Diagonalised True Posterior Covariance.} We will work with the diagonalised covariance matrix. For this, we define $\posterior = V \Delta V^\top$, where $\Delta = \text{diag}(\delta_1, \dots, \delta_P), \delta_p \in \R^+$, is a diagonal matrix and the columns of $V = [v_1, \dots, v_P], v_p \in \R^P$, form a complete, orthonormal basis. We will use the expression 
\begin{equation}
    \posterior = \sum_{q=1}^P \delta_q v_q v_q^\top.
\end{equation}

\paragraph{The variational posterior.}
We are interested in finding an approximate posterior $q(\theta) = N(m,S)$ by minimising the reverse Kullback-Leibler (KL) divergence to the true posterior. For our linear regression model, the true posterior is Gaussian, so the objective to be minimised is
\begin{multline}
    \KL{q(\theta)}{p(\theta| \data)} = \frac{1}{2} \big[(\mu - m)^\top  \posterior^{-1} (\mu - m) \\ + Tr(\posterior^{-1}S) - \log \det S + \log \det \posterior -P \big].
\end{multline}
This objective is minimised by the true posterior; however, this is commonly not available for computational reasons. 

Instead, the variational posterior is constrained to factorise over its $P$ dimensions. In the Multivariate Gaussian case, this requires constraining the variational posterior to be diagonal, which is equivalent to fixing the eigenbasis to be axis-aligned. This allows us to write 
\begin{equation}
    S =  \sum_{p=1}^P d_{p} e_p e_p^\top, 
\end{equation}
where $e_p$ are the canonical basis vectors which contain a single $1$ at position $p$ and are zero otherwise. %If the canonical basis vectors are used as the basis vectors, $e_p$, then $U$ will be the identity matrix, and the typical diagonal form for the MFVI posterior will be found. 

In this form, the goal of fitting the variational posterior is given by finding the optimal parameters of $m^*$ and $\{d^*_p\}_{p=1}^P$. The optimal parameters for these, formalised in the following two lemmas, can be found as functions of the posterior mean, covariance eigenvalues, and the inner product of the orthonormal bases of the posteriors, $v_q^T e_p$. The proofs for these can be found in Appendix~\ref{ap:variational-proofs}.

% \begin{restatable}[Optimal variational mean]{lemma}{optimalvarmean}
% \label{lemma:optimal_var_mean}
%     For any positive definite posterior covariance, the optimal mean is the mean of the posterior: 
%     \begin{equation} \label{eq:mfvi_optimal_mean}
%         m^* = \mu.
%     \end{equation}
% \end{restatable}

\begin{restatable}[Optimal variational mean]{lemma}{varmean}
\label{lemma:optimal_var_mean}
    For any positive definite posterior covariance, the optimal mean is the mean of the true posterior: 
    \begin{equation} \label{eq:mfvi_optimal_mean}
        m^* = \mu.
    \end{equation}
\end{restatable}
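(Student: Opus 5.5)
The plan is to exploit the fact that the KL objective separates into a term depending only on $m$ and a term depending only on $S$. Write the objective as
\begin{equation*}
\KL{q(\theta)}{p(\theta|\data)} = \tfrac{1}{2}(\mu - m)^\top \posterior^{-1}(\mu - m) + \tfrac{1}{2}\left[\Tr{\posterior^{-1}S} - \log\det S + \log\det\posterior - P\right].
\end{equation*}
The mean-field constraint restricts only $S$ to be diagonal; $m \in \R^P$ is unconstrained. So the joint minimisation over $(m,\{d_p\})$ splits into two independent minimisations, and the optimal $m^*$ minimises the quadratic form $g(m) = (\mu-m)^\top \posterior^{-1}(\mu-m)$ whatever the value of $S$.

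Next, positive definiteness of $\posterior$ implies that $\posterior^{-1}$ is positive definite. Hence $g(m) \ge 0$ for every $m$, with equality exactly when $\mu - m = 0$. Equivalently, in the eigenbasis, $g(m) = \sum_q \delta_q^{-1}\big(v_q^\top(\mu-m)\big)^2$ with every $\delta_q > 0$, so it vanishes only when all projections vanish, i.e.\ $m=\mu$. Therefore the unique minimiser is $m^* = \mu$. As a cross-check, the gradient $-2\posterior^{-1}(\mu-m)$ vanishes only at $m=\mu$, and the Hessian $2\posterior^{-1}$ is positive definite.

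There is no real obstacle here. The one point to state carefully is the decoupling: the variational family is a product set, with $m$ free and $S$ ranging over diagonal positive-definite matrices, and the objective is a sum of a function of $m$ alone and a function of $S$ alone. A joint minimiser therefore has $m^*$ minimising the first term, independently of $\{d^*_p\}$. Positive definiteness is exactly what guarantees uniqueness; if $\posterior^{-1}$ were only semidefinite, any $m$ differing from $\mu$ by a vector in its null space would also be optimal.
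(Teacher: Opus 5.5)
Your proposal is correct and follows the same route as the paper: since $\Sigma^{-1}$ is positive definite, the quadratic term $(\mu-m)^\top\Sigma^{-1}(\mu-m)$ is nonnegative and vanishes only at $m=\mu$. Beyond the paper's short proof, you also state explicitly that the $m$- and $S$-terms decouple, with $m$ unconstrained, and that the minimiser is unique; the paper leaves both points implicit.
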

% \begin{lemma}[Optimal variational mean] \label{lemma:optimal_var_mean}
%     For any positive definite posterior covariance, the optimal mean is the mean of the true posterior: 
%     \begin{equation} \label{eq:mfvi_optimal_mean}
%         m^* = \mu.
%     \end{equation}
% \end{lemma}

\begin{lemma} [Optimal variational posterior eigenvalues are harmonic means] \label{th:MFVI-hm}
    The optimal variational covariance, $S^*$, is given by the condition
    \begin{equation} \label{eq:mfvi_optimal_cov}
        \frac{1}{d_p^*} = {e_p^\top \posterior^{-1} e_p} = {\sum_{q=1}^P \frac{w_{pq}}{\delta_q}} \quad \forall \; p \in \{1, \dots P\},
    \end{equation}
    where $w_{pq} = (v_q^T e_p)^2$ and  $\sum_{q=1}^P w_{pq} = 1 \; \forall \; p \in \{1, \dots, P\}$.
\end{lemma}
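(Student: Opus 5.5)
The plan is to substitute the diagonal form $S=\sum_p d_p e_p e_p^\top$ into the KL objective and minimise coordinate-wise over the $d_p$. By \cref{lemma:optimal_var_mean} we may set $m=\mu$, so the quadratic mean term vanishes and only the covariance-dependent terms remain:
\begin{equation*}
    \frac{1}{2}\left[\Tr{\posterior^{-1}S} - \log\det S\right] + \text{const}.
\end{equation*}
Because $S$ is diagonal, $\Tr{\posterior^{-1}S}=\sum_p d_p\, e_p^\top\posterior^{-1}e_p$ and $\log\det S=\sum_p \log d_p$. The objective therefore decouples into a sum of one-dimensional problems $f_p(d_p)= d_p\,(\posterior^{-1})_{pp}-\log d_p$, one per coordinate.

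Each $f_p$ is strictly convex on $d_p>0$. This requires $(\posterior^{-1})_{pp}>0$, which holds because $\posterior^{-1}$ is positive definite. Since $f_p$ tends to $+\infty$ both as $d_p\to 0^+$ and as $d_p\to\infty$, its stationary point is the unique global minimiser. Setting $f_p'(d_p)=(\posterior^{-1})_{pp}-1/d_p=0$ gives $1/d_p^*=e_p^\top\posterior^{-1}e_p$, which is the first equality in \eqref{eq:mfvi_optimal_cov}.

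For the second equality, use the eigendecomposition $\posterior^{-1}=\sum_q \delta_q^{-1}v_qv_q^\top$. This follows because $V$ is orthogonal, so the inverse of $V\Delta V^\top$ is $V\Delta^{-1}V^\top$. Then $e_p^\top\posterior^{-1}e_p=\sum_q (v_q^\top e_p)^2/\delta_q=\sum_q w_{pq}/\delta_q$.

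The normalisation $\sum_q w_{pq}=1$ holds because $\{v_q\}$ is a complete orthonormal basis. By Parseval, $\sum_q (v_q^\top e_p)^2=\|e_p\|^2=1$; equivalently, $VV^\top=\I$ and its $(p,p)$ entry is $1$. This shows $1/d_p^*$ is a weighted arithmetic mean of the $1/\delta_q$, so $d_p^*$ is a weighted harmonic mean of the $\delta_q$, matching the lemma's title.

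No step is a real obstacle. The only point needing care is justifying that the joint minimisation over $(m,S)$ separates. This holds because the mean term and the covariance terms involve disjoint variables, so fixing $m=\mu$ via \cref{lemma:optimal_var_mean} loses nothing. It also holds because the covariance terms are additive across the diagonal entries, so the coordinate-wise stationary points together form the global minimiser.
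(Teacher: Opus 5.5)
Your proposal is correct and takes essentially the same route as the paper. Both isolate $\Tr{\posterior^{-1}S}-\log\det S$, decouple it over the diagonal entries $d_p$, set the derivative to zero, and then expand $\posterior^{-1}$ in its eigenbasis, with $\sum_q w_{pq}=1$ following from orthonormality. Your convexity and coercivity argument for each $f_p$ is a small addition that makes explicit why the stationary point is the global minimiser, which the paper leaves implicit.
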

While the values of the optimal variational posterior are well known in the literature \citep{Turner_Sahani_2011, margossian2023shrinkage}, we are unaware of prior work describing the variational posterior eigenvalues as a weighted harmonic mean of the eigenvalues of the exact posterior. Later, this will be important for our theoretical arguments.

For a test input location, $x$, we are interested in comparing the predictive distributions excluding aleatoric uncertainty, which are given by 
\begin{equation}
\begin{split}
    p(f| \data) &= N(\mu^\top x, x^\top \posterior x ),
    \\q(f) &= N(m^{* \top} x, x^\top S^* x )
\end{split}
\end{equation}
 for the exact and MFVI posteriors, respectively. As the optimal variational and exact posterior means are identical, the only difference in these distributions is in the variance of the predictions, which is where we focus from now on.

\subsection{Theoretical Analysis of Predictions from MFVI}
This section proves theoretical results about the relationship between predictions from the MFVI and the exact posterior. 
We start by confirming the traditional story that MFVI underestimates variance, \textit{i.e.} is overconfident, showing this is the case when the test input is isotropic. 
%First, we show the way in which the traditional story that MFVI is overconfident is correct: case where the test input is isotropic. 
After this, we focus on the ways in which this traditional story is misleading. First, we show that there must be at least one direction in which the MFVI overestimates variance and, for spherical priors, this is the direction of maximum variance of the training data. After this, we show that the degree of over- and underestimation of the predictive variance is coupled together. Finally, we give our main result: MFVI overestimates variance, \textit{i.e.} is underconfident, for test points similar to the training data.
Formal proofs of all our theorems are in Appendix \ref{ap:proofs}. 

\paragraph{MFVI predictions underestimate uncertainty for isotropic test points.}  We begin by considering the setting where the test distribution is isotropic, $x \sim N(0, \I_P)$. This corresponds to the assumption that test points are equally likely to arrive from any direction in the input space, a setting in which we would expect MFVI's tendency to underestimate variance to manifest clearly. Indeed, in this setting, we recover the standard result that MFVI is overconfident.

To see this, we note that the expected predictive variance under an isotropic test distribution is simply the trace of the posterior covariance:
\begin{equation}
    \expectation{x \sim N(0, \I_P)}{x^\top A x} = \Tr{A}
\end{equation}
for any matrix $A$. The following lemma then follows directly from proving that the trace of the exact posterior is greater than the MFVI posterior, which we do in  Lemma~\ref{th:tr-inequality}.
\begin{lemma}
    \label{th:isotropic-pred-var}
    For a test point $x\sim N(0, \I_P)$, the predictive variances of the MFVI posterior and exact posterior are governed by 
    \begin{equation}
        \expectation{x\sim N(0, \I_P)}{x^\top \posterior x} \geq \expectation{x\sim N(0, \I_P)}{x^\top S^* x}.
    \end{equation}
\end{lemma}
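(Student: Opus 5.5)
The plan is to reduce the statement to a trace comparison and then compare the two traces coordinate by coordinate. Because $x\sim N(0,\I_P)$ has $\expectation{}{xx^\top}=\I_P$, for any symmetric $A$ we have $\expectation{}{x^\top A x}=\Tr{A\,\expectation{}{xx^\top}}=\Tr{A}$. The claim is therefore equivalent to $\Tr{\posterior}\ge\Tr{S^*}$, which is what Lemma~\ref{th:tr-inequality} is meant to record.

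To prove the trace inequality, I will write both traces as sums of diagonal entries in the canonical basis. First, $\Tr{S^*}=\sum_p d_p^*$. Second, $\Tr{\posterior}=\sum_p e_p^\top\posterior e_p$, and using $\posterior=\sum_q\delta_q v_qv_q^\top$ this becomes $\Tr{\posterior}=\sum_p\sum_q w_{pq}\delta_q$, with the same weights $w_{pq}=(v_q^\top e_p)^2$ as in Lemma~\ref{th:MFVI-hm}.

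By Lemma~\ref{th:MFVI-hm}, $d_p^*=\big(\sum_q w_{pq}/\delta_q\big)^{-1}$. This is the weighted harmonic mean of the $\delta_q$ with weights $w_{pq}$, which are nonnegative and sum to one over $q$. The diagonal entry $e_p^\top\posterior e_p$ is the weighted arithmetic mean of the same numbers with the same weights.

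The key step is the weighted AM--HM inequality, applied for each fixed $p$: $\sum_q w_{pq}\delta_q\ge\big(\sum_q w_{pq}/\delta_q\big)^{-1}$. I would prove it from Jensen's inequality for the convex function $t\mapsto1/t$ on $\R^+$. Jensen gives $\sum_q w_{pq}/\delta_q\ge 1/\sum_q w_{pq}\delta_q$, and taking reciprocals of these positive quantities yields the inequality. Equivalently, Cauchy--Schwarz gives $\big(\sum_q w_{pq}\big)^2\le\big(\sum_q w_{pq}\delta_q\big)\big(\sum_q w_{pq}/\delta_q\big)$.

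Equivalently, this is the classical fact $(\posterior)_{pp}\,(\posterior^{-1})_{pp}\ge1$ for a positive definite matrix. Summing over $p$ then gives $\Tr{\posterior}\ge\Tr{S^*}$, which completes the argument.

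There is no real obstacle here. The only points to check are that all $\delta_q>0$, which holds because $\posterior$ is positive definite, and that the weights $w_{pq}$ sum to one over $q$, which holds by orthonormality of the $v_q$. Equality holds exactly when, for each $p$, all $\delta_q$ with $w_{pq}>0$ coincide, that is, when $\posterior$ is diagonal in the canonical basis.
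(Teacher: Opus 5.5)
Your proposal is correct and follows the paper's route: the paper also reduces the expectation to $\Tr{\posterior}\ge\Tr{S^*}$ (Lemma~\ref{th:tr-inequality}), and it obtains that inequality by summing the per-coordinate weighted AM--HM inequality $e_p^\top\posterior e_p\ge e_p^\top S^* e_p$ (Lemma~\ref{th:MFVI-eb-oc}). Your Jensen/Cauchy--Schwarz justification of AM--HM and your equality characterization are small additions that the paper leaves implicit.
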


This result aligns with the standard understanding that MFVI underestimates uncertainty. However, the assumption of isotropic test data is often unrealistic: in practice, we expect test points to follow a similar distribution to the training data. We now turn to this more realistic setting.

\paragraph{MFVI overestimates uncertainty along the first principal component of the training data.} The first result we use to argue that MFVI overestimates uncertainty is to show that there is a direction in the input space where, if MFVI and exact posterior predictions are not equal, the MFVI posterior overestimates the predictive uncertainty. 

This direction is the span of the eigenvector of the minimum eigenvalue of the true posterior, that is, for a test point  $\tilde x \in \text{span}(v_{q^*})$ where $\delta_{min} = v_{q^*}^\top \posterior v_{q^*}$. 
\begin{lemma}
[Overestimation of predictive variance in one direction of the input space]
\label{th:underconfident-pc}
For points in the input space defined by $ \tilde x \in \text{span}(v_{q^*})$, the predictive variance for the exact posterior and the MFVI posterior are governed by the inequality 
    \begin{equation}
        \tilde x ^\top S^* \tilde x  \geq \tilde x^\top \posterior \tilde x.
    \end{equation}
\end{lemma}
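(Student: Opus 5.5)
By homogeneity it suffices to treat $\tilde x = v_{q^*}$; for a general $\tilde x = c\,v_{q^*}$ both sides of the inequality scale by $c^2$. The right-hand side is immediate from the eigendecomposition, since $v_{q^*}^\top \posterior v_{q^*} = \delta_{min}$. The claim therefore reduces to showing $v_{q^*}^\top S^* v_{q^*} \geq \deltamin$. The plan is to prove the stronger statement that every diagonal entry of $S^*$ is at least $\deltamin$, and then use that $v_{q^*}$ is a unit vector.

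For the diagonal entries we use Lemma~\ref{th:MFVI-hm}. It gives
\begin{equation*}
\frac{1}{d_p^*} = \sum_{q=1}^P \frac{w_{pq}}{\delta_q},
\end{equation*}
with $w_{pq}\geq 0$ and $\sum_q w_{pq}=1$. Each term satisfies $1/\delta_q \leq 1/\deltamin$, so $1/d_p^* \leq \sum_q w_{pq}/\deltamin = 1/\deltamin$. Hence $d_p^* \geq \deltamin$ for every $p$. This is just the statement that a weighted harmonic mean lies between the smallest and largest of the values being averaged.

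Next, expand the MFVI predictive variance in the canonical basis. Writing $v_{q^*} = \sum_p (e_p^\top v_{q^*}) e_p$ and using $S^* = \sum_p d_p^* e_p e_p^\top$, we get
\begin{equation*}
v_{q^*}^\top S^* v_{q^*} = \sum_{p=1}^P d_p^* (e_p^\top v_{q^*})^2 = \sum_{p=1}^P d_p^* w_{pq^*} \geq \deltamin \sum_{p=1}^P w_{pq^*}.
\end{equation*}
Orthonormality of the canonical basis gives $\sum_p (e_p^\top v_{q^*})^2 = \|v_{q^*}\|^2 = 1$. So the last sum equals $1$, and therefore $v_{q^*}^\top S^* v_{q^*} \geq \deltamin = v_{q^*}^\top \posterior v_{q^*}$.

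There is no genuinely hard step. The one point to handle carefully is the bookkeeping of the weights. Lemma~\ref{th:MFVI-hm} sums $w_{pq}$ over $q$ (rows), whereas this argument needs the sum over $p$ (columns). Both sums equal $1$ because $V$ is orthogonal, so $W=(w_{pq})$ is doubly stochastic. It is worth stating this explicitly in the proof.
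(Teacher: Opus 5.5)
Your proof is correct and is essentially the paper's argument: reduce to $c=1$, expand $v_{q^*}^\top S^* v_{q^*} = \sum_p w_{pq^*} d_p^*$, bound each harmonic-mean entry $d_p^*$ below by $\delta_{q^*}$, and use that the weights $w_{pq^*}$ sum to one over $p$. Your point about needing both the row and the column sums of $w_{pq}$ is exactly what the paper's auxiliary Lemma~\ref{th:w-sum-to-one} supplies.
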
 
The intuition for this is that the exact posterior predictive variance will be $\delta_{min}$, which, because the eigenvalues of the MFVI posterior are a harmonic mean of the exact eigenvalues, must be lower than the lowest eigenvalue and the lowest predictive variance of the MFVI posterior.

% In the common case of spherical priors, where the prior variance is given by $\prior$, the direction in which this underestimation occurs is of great significance: it is the first principal component of the training data. This can be seen by noting that the posterior covariance is given by 
% %\begin{equation}
%     $\posterior = \left(\precisionprior + \mbox{$\frac{1}{\likelihood}$}X^\top X \right)^{-1}$,
% %\end{equation}
% and that, for positive definite matrices such as the posterior, neither adding an identity matrix nor inverting changes the eigenbasis. Hence the eigenbasis $\{v_q\}_{q=1}^P$ is not only the eigenbasis of the true posterior, it is the eigenbasis of the empirical Gram matrix $X^\top X $. If we denote the MLE of the covariance of the input data as $\hat \Gamma = \frac{1}{N} X^\top X = \sum_{q=1}^P v_q v_q^\top \gamma_q$, the posterior eigenvalues are given by 
% \begin{equation}
%     \delta_q = \frac{1}{\alpha + \frac{N}{\likelihood}\gamma_q}.
% \end{equation}
% This means that the minimum posterior eigenvalue of the true posterior $\delta_{min}$ is associated with the same eigenvector as the maximum eigenvalue of the empirical covariance $\gamma_{max}$, that is, the first principal component of the data (the direction in the input space where the maximum variance was observed). 

In the common case of spherical priors, where the prior variance is given by $\prior$, the relative size of the posterior eigenvalues are entirely determined by the variance of the training data. This means that the direction in which this underestimation occurs is of great significance: it is the first principal component of the training data. More formally we can state the following theorem. 
% \begin{theorem}
%     Consider a conjugate linear model with a spherical prior, $\prior$, which is fit on a set of training data inputs, $X$, that have principal component $w $.  In this model, the predictive variances for any test point, $\tilde x \in \text{span}(w)$, for the MFVI posterior, $\tilde x^\top S  \tilde x $, and the exact posterior, $\tilde x^\top \posterior \tilde x$, will be given by $\tilde x^\top S  \tilde x \geq \tilde x^\top \posterior \tilde x $.
%     \label{th:pc-remark}
% \end{theorem}
\begin{theorem}
\label{th:pc-remark}
Consider the conjugate Bayesian linear model of \cref{eq:linear-regression} with a spherical prior $\theta \sim N(0, \prior)$, and let $w$ denote the first principal component of the training inputs $X$. Then for any test point $\tilde x \in \text{span}(w)$, the predictive variances of the MFVI and exact posteriors satisfy
\begin{equation}
    \tilde x^\top S^* \tilde x \;\geq\; \tilde x^\top \posterior \tilde x.
\end{equation}
\end{theorem}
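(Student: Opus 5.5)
The plan is to reduce the statement to Lemma~\ref{th:underconfident-pc}. To do so, I will show that under a spherical prior the first principal component $w$ of the training inputs spans the same line as the eigenvector $v_{q^*}$ of $\posterior$ with minimal eigenvalue $\deltamin$. Once $\text{span}(w)=\text{span}(v_{q^*})$ is established, the inequality $\tilde x^\top S^*\tilde x \ge \tilde x^\top\posterior\tilde x$ for $\tilde x\in\text{span}(w)$ is exactly the conclusion of Lemma~\ref{th:underconfident-pc}.

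For the identification, write the empirical second-moment matrix as $\hat\Gamma = \frac1N X^\top X = \sum_{q=1}^P \gamma_q v_q v_q^\top$. Because the data are mean-centred, this is the MLE of the input covariance, so its top eigenvector is the first principal component. The precision $\posterior^{-1} = \frac{N}{\likelihood}\hat\Gamma + \precisionprior$ then has the same eigenvectors as $\hat\Gamma$, since adding a multiple of the identity does not change eigenvectors. Its eigenvalues are $\alpha + \frac{N}{\likelihood}\gamma_q$. Inverting preserves eigenvectors, so
\begin{equation*}
\delta_q = \frac{1}{\alpha + \frac{N}{\likelihood}\gamma_q}.
\end{equation*}
This map is strictly decreasing in $\gamma_q$, because $\alpha>0$, $\likelihood>0$, and $\gamma_q\ge 0$. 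Hence the minimal $\delta_q$ corresponds to the maximal $\gamma_q$, which means $v_{q^*}$ can be taken to be $w$.

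I would also recall, for completeness, why Lemma~\ref{th:underconfident-pc} holds. By Lemma~\ref{th:MFVI-hm}, each $1/d_p^*$ is a convex combination of the values $1/\delta_q$, so $1/d_p^*\le 1/\deltamin$, that is, $d_p^*\ge\deltamin$ for all $p$. Hence $S^*\succeq \deltamin\I$, which gives $\tilde x^\top S^*\tilde x \ge \deltamin\|\tilde x\|^2 = \tilde x^\top\posterior\tilde x$ for $\tilde x\in\text{span}(v_{q^*})$.

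The main obstacle I expect is ties. If the largest eigenvalue $\gamma_{max}$ of $\hat\Gamma$ is repeated, the first principal component is not unique. The eigenvector $v_{q^*}$ is then also not unique. I would handle this by noting that any unit $\tilde x$ in the $\deltamin$-eigenspace of $\posterior$ still satisfies $\tilde x^\top\posterior\tilde x=\deltamin\|\tilde x\|^2$. The bound $S^*\succeq\deltamin\I$ then gives the result for every choice of $w$ within that eigenspace, which coincides with the top eigenspace of $\hat\Gamma$. No other steps pose difficulty.
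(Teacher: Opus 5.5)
Your proposal is correct and follows essentially the same route as the paper: under the spherical prior, $\posterior$ and $X^\top X$ share an eigenbasis, and the map $\delta_q = (\alpha + \frac{N}{\likelihood}\gamma_q)^{-1}$ is strictly decreasing, so the first principal component coincides with $v_{q^*}$, and Lemma~\ref{th:underconfident-pc} then gives the inequality. Your repackaging of that lemma as $S^* \succeq \deltamin \I$ and your handling of a repeated top eigenvalue are tidy additions that the paper's proof does not spell out.
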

The proof follows from noting that, for spherical priors, the exact posterior shares an eigenbasis with the gram matrix of the training data, $X^\top X$, so $w = v_{q^*}$ and Lemma~\ref{th:underconfident-pc} can be applied to give the desired result.

\paragraph{MFVI under- and overestimates uncertainties similarly.} We can take the above result and strengthen it by showing that the degree of overestimation and underestimation of the predictive density are linked.
To do this, we introduce the ratio of the predictive variances at test point $x$, 
\begin{equation} \label{eq:ratio-of-vars}
    R(x) = \frac{x^\top S^*  x }{x^\top \posterior x }. 
\end{equation} 
$R(x)>1$ implies an overestimated uncertainty and $R(x)\leq 1$ implies underestimated uncertainty. We can now consider the average value of this new quantity across the complete eigenbasis of the exact posterior, and we will find the following surprising result.
\begin{lemma}
[Calibrated MFVI]
\label{th:calibrated-points}
    Consider the set of eigenvectors of the true posterior $\{v_q\}_{q=1}^P$. For these points 
    \begin{equation}
     \frac{1}{P}\sum_{q=1}^P R(v_q ) = 1.  
    \end{equation}
\end{lemma}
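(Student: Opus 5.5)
We compute each ratio explicitly and then sum, using the structure of $S^*$ from Lemma~\ref{th:MFVI-hm}. Since the $v_q$ are orthonormal eigenvectors of $\posterior$, the denominator is simply $v_q^\top \posterior v_q = \delta_q$. For the numerator we write $S^* = \sum_{p=1}^P d_p^* e_p e_p^\top$. This gives
\begin{equation*}
    v_q^\top S^* v_q = \sum_{p=1}^P d_p^* (v_q^\top e_p)^2 = \sum_{p=1}^P d_p^* w_{pq},
\end{equation*}
with $w_{pq}$ as defined in Lemma~\ref{th:MFVI-hm}. Hence $R(v_q) = \delta_q^{-1}\sum_{p} d_p^* w_{pq}$.

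Next we sum over $q$ and exchange the order of summation:
\begin{equation*}
    \sum_{q=1}^P R(v_q) = \sum_{p=1}^P d_p^* \sum_{q=1}^P \frac{w_{pq}}{\delta_q}.
\end{equation*}
By the harmonic-mean characterisation of Lemma~\ref{th:MFVI-hm}, the inner sum is exactly $1/d_p^*$. Each term therefore equals $1$, so the total is $P$, and dividing by $P$ gives the claim.

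No step here is a genuine obstacle. The key observation is to sum over $q$ before anything else, so that the inner sum $\sum_q w_{pq}/\delta_q$ matches the optimality condition \eqref{eq:mfvi_optimal_cov} exactly. The only other point to check is that each $\delta_q > 0$, so that the ratios are well defined; this holds because $\posterior$ is positive definite. No property of the data or prior beyond Lemma~\ref{th:MFVI-hm} is needed.
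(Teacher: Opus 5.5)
Your proof is correct and is essentially the paper's argument written out in coordinates. The paper evaluates $\Tr{\posterior^{-1}S^*}$ once in the eigenbasis $\{v_q\}$, which gives $\sum_q R(v_q)$, and once in the canonical basis $\{e_p\}$, which gives $\sum_p d_p^* e_p^\top \posterior^{-1} e_p = P$ via Lemma~\ref{th:conserved-trace}; your exchange of the double sum over $w_{pq}/\delta_q$ is exactly this basis-independence of the trace.
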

While we are unable to provide an intuitive explanation for why this is the case, the result can be found by first showing that $\Tr{\posterior^{-1}S^*}=P$, as we do in Lemma~\ref{th:conserved-trace}, and then performing straightforward algebra.  

This constraint, that the average value of $R(x)$ is constant across eigenvectors of the true posterior, can cause problems for predicting in-distribution data. In particular, in high-dimensional settings, where many eigenvector directions have underestimated variance, the degree of overestimation in other directions must be large to compensate. We demonstrate an extreme version of this in \cref{sec:toy-example}.

\paragraph{MFVI overestimates predictive variance for data with the empirical covariance.}
So far, we have shown the following results: from Lemma~\ref{th:isotropic-pred-var} we can expect that there are a lot of directions in the input where the variance is underestimated, from Lemma~\ref{th:underconfident-pc} we can expect that for spherical priors the highest variance of the training data is underconfident, and from Lemma~\ref{th:calibrated-points} we know that the degrees of this overestimation and underestimation are coupled in some way. These results provide some intuition for our main theorem, which says that, for BLR problems with a spherical prior, MFVI overestimates predictive uncertainty for test data points distributed according to the empirical distribution of the training data, $\hat p(x)$. 
\begin{theorem} \label{th:empirical_pred_var} 
For test data points distributed according to  the empirical distribution $x\sim \hat p(x)$, the difference in the expected predicted variance of the MFVI and exact posterior is given by 
    \begin{equation}
        \expectation{x\sim \hat p (x)}{x^\top \posterior x -  x^\top S^*  x }  \leq 0 . 
    \end{equation}
\end{theorem}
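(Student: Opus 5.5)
The plan is to reduce both expectations to traces and then use the spherical-prior structure to write the empirical second-moment matrix as an affine function of $\posterior^{-1}$. Let $\hat\Gamma = \frac{1}{N}X^\top X = \expectation{x\sim\hat p(x)}{x x^\top}$. For any symmetric $A$ we have $\expectation{x\sim \hat p(x)}{x^\top A x} = \Tr{\hat\Gamma A}$. The claim is therefore equivalent to
\begin{equation*}
\Tr{\hat\Gamma \posterior} \leq \Tr{\hat\Gamma S^*}.
\end{equation*}

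\textbf{Key step.} With the spherical prior $N(0,\prior)$ (assumed throughout this section, as in \cref{th:pc-remark}), the posterior precision is $\posterior^{-1} = \precisionprior + \frac{N}{\likelihood}\hat\Gamma$. Solving for the Gram matrix gives
\begin{equation*}
\hat\Gamma = \frac{\likelihood}{N}\left(\posterior^{-1} - \precisionprior\right).
\end{equation*}
Substituting this into both traces yields
\begin{equation*}
\Tr{\hat\Gamma\posterior} = \frac{\likelihood}{N}\left(P - \alpha\Tr{\posterior}\right),
\qquad
\Tr{\hat\Gamma S^*} = \frac{\likelihood}{N}\left(\Tr{\posterior^{-1}S^*} - \alpha \Tr{S^*}\right).
\end{equation*}

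\textbf{Conserved trace.} Next I invoke \cref{th:conserved-trace}, that $\Tr{\posterior^{-1}S^*}=P$. This also follows immediately from \cref{th:MFVI-hm}: because $S^*$ is diagonal,
\begin{equation*}
\Tr{\posterior^{-1}S^*} = \sum_p d_p^*\, e_p^\top\posterior^{-1}e_p = \sum_p 1 = P.
\end{equation*}
The two constant terms therefore cancel, and the difference becomes
\begin{equation*}
\expectation{x\sim\hat p(x)}{x^\top\posterior x - x^\top S^* x} = \frac{\alpha\likelihood}{N}\left(\Tr{S^*} - \Tr{\posterior}\right).
\end{equation*}

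\textbf{Trace inequality.} It remains to show $\Tr{S^*}\leq\Tr{\posterior}$; this is \cref{th:tr-inequality}, already used for \cref{th:isotropic-pred-var}. If I need to argue it directly, I would compare entrywise: $d_p^* = 1/\sum_q w_{pq}\delta_q^{-1}$ and $\posterior_{pp} = \sum_q w_{pq}\delta_q$. Since the weights satisfy $\sum_q w_{pq}=1$, the weighted harmonic mean is at most the weighted arithmetic mean, equivalently Jensen's inequality for $t\mapsto 1/t$. Hence $d_p^*\leq \posterior_{pp}$ for every $p$, and summing over $p$ gives the trace bound. Since $\alpha,\likelihood,N>0$, the difference above is $\leq 0$, which proves the theorem.

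\textbf{Main obstacle.} The only real idea is the substitution of $\hat\Gamma$ by $\posterior^{-1}-\precisionprior$. This is also the only place the spherical prior is used: it is what makes the data-dependent term collapse to the conserved trace $P$. For a non-spherical prior the leftover term would be $\Tr{\Lambda_0(S^*-\posterior)}$ for a general prior precision $\Lambda_0$, which need not have a sign. So the care needed is in stating explicitly that the theorem assumes the spherical prior of \cref{th:pc-remark}; once that is fixed, the remaining steps are routine algebra.
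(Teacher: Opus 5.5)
Your proposal is correct and follows the paper's own proof essentially step for step. It uses the substitution $\hat\Gamma = \frac{\sigma^2}{N}(\Sigma^{-1}-\alpha\mathbb{I})$, cancellation via the conserved trace $\mathrm{Tr}(\Sigma^{-1}S^*)=P$, and the termwise AM--HM bound $d_p^*\le\Sigma_{pp}$ giving $\mathrm{Tr}(S^*)\le\mathrm{Tr}(\Sigma)$, which are exactly the paper's ingredients. One small refinement to your closing remark: the same computation gives $\frac{\sigma^2}{N}\sum_p \lambda_p (d_p^*-\Sigma_{pp})\le 0$ for any diagonal prior precision $\Lambda_0=\mathrm{diag}(\lambda_p)$, so the sign is only in question for non-diagonal priors.
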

% \todo[color=green]{Not having the proof for the main theorem feels weird to me if we have other proof in the main text.}

In our opinion, this is a pretty remarkable result: while MFVI can in most senses be considered to underestimate the epistemic uncertainty, in the very important case of predicting on test data which shares covariance with the training data, MFVI overestimates uncertainty. 

\section{Cold MFVI posterior predictions can match Bayesian predictions}
\label{sec:cold_posterior}

\paragraph{The Cold Posterior Objective for MFVI.}  Cold posteriors describe exponentiated posteriors, $p_T(\theta| \data) \propto p(\data| \theta)^{1/T} p(\theta)^{1/T}$, where $T<1$. This exponentiation corresponds to sharpening the exact posterior, so more of the mass is concentrated around the Maximum A Posteriori (MAP) estimate. If, rather than minimising the KL divergence to the exact posterior, we minimise the KL divergence to the cold posterior, we get a variational posterior 
\begin{equation}
    q_T^*(\theta) = \min_{q(\theta)\in Q} \KL{ q(\theta)}{p_T(\theta|\data)}.
\end{equation}
We refer to this solution as the $T$-MFVI posterior. 
For the specific conjugate case where the prior has the form $N(0, \alpha^{-1} \I_P)$ and the likelihood has the form $N(y|\theta^\top  x, \likelihood) $, the optimal parameters are
\begin{equation}\label{equ:t_mfvi_post}
    q_T^*(\theta) = N(m^*, T S^*),
\end{equation}
where $m^*$ and $S^*$ are the solutions at $T=1$ given in \cref{eq:mfvi_optimal_mean,eq:mfvi_optimal_cov}. We prove this in Appendix~\ref{ap:cold-posterior-proofs}. 

\paragraph{Different Temperatures for Different Tasks.} When making predictions, there are several possible settings which would lead to different optimal temperatures. Intuitively, given the overestimation of variance in \cref{th:empirical_pred_var}, we may expect that there is an optimal temperature $T<1$ which will make the $T$-MFVI and exact posterior predictive distributions match most closely on in-distribution data. However, by Lemma~\ref{th:calibrated-points}, if there are some directions in the input space where the predictive variance is overestimated, there must also be locations where the predictive variance is underestimated. We can therefore deduce that there may be a temperature $T>1$ which would make out-of-distribution predictions better calibrated. 

Furthermore, it is not immediately obvious with which measure to compare the variational and exact posteriors. There are multiple ways to measure distances between predictive distributions, each of which captures meaningful but different notions of similarity. 
In our experiments in \cref{sec:experiments}, we examine both ID and OOD \ data, and consider multiple notions of distances to the true posterior. We confirm that we should expect cold temperatures ($T<1$) to perform well for ID data, and warmer temperatures for OOD \ data, and that this is generally true across multiple measures of discrepancy between distributions.

\textbf{Cold Posteriors vs. Tempered Posteriors.} There are two distinct types of sharpened posteriors: tempered posteriors, in which the prior is kept constant but the likelihood is scaled, and cold posteriors, in which both the likelihood and the prior terms are scaled \cite{aitchison2020statistical}. VI methods typically favour using a tempered approach, rather than the approach taken here, which fits naturally within the ELBO objective by simply scaling the expected log-likelihood compared to the KL term. It is not generally possible to write the cold posterior objective in such an elegant way, albeit with an exception for the Gaussian priors used here \cite{aitchison2020statistical}. In general, if the cause of the CPE is one of those in Section~\ref{sec:background}, we would advocate for the use of a tempered posterior. However, this paper is interested in how well predictions from an approximate posterior match predictions from an exact posterior, and in this case, we believe that there is a good reason to prefer the cold posterior to the tempered posterior. 

Concretely, the cold posterior MFVI shares a mean with the $T=1$ posterior in our case (Multivariate Gaussians), while the tempered posterior does not. 
%In the setting of MFVI for Multivariate Gaussians, the variational posterior matches the exact posterior mean exactly; 
If we applied the tempered posterior, we would face a trade-off between the quality of our posterior predictive and the posterior mean and variance. The cold posterior avoids this trade-off by maintaining the same mean for all values of $T$.

\section{Experiments}

\subsection{Pathological example: Low Rank Linear Regression}
\label{sec:toy-example}
% \begin{figure}[t!]
%     \centering
%     \includegraphics[width=0.95\linewidth]{figs/toy_illustration/prediction_analysis_iid.pdf}
%     \includegraphics[width=0.95\linewidth]{figs/toy_illustration/prediction_analysis_ood.pdf}
%     \caption{Figure showing the CPE strength, along with the prediction density at 3 different temperatures.  At low dimensions the MFVI posterior predictive closely matches the true value, the CPE is small, the $T_{crit}$ is close to one. As the number of dimensions increases, the strength of the CPE increases, the MFVI prediction at $T=1$ tends towards the prior predictive density. \todo[inline]{Add $T_{crit}$ to ood, and add caption for ood. Change axis and labels to be correct for the prdiction.  }}
%     \label{fig:placeholder}
% \end{figure}

\begin{figure}[t]
    \centering
    \includegraphics[width=\linewidth]{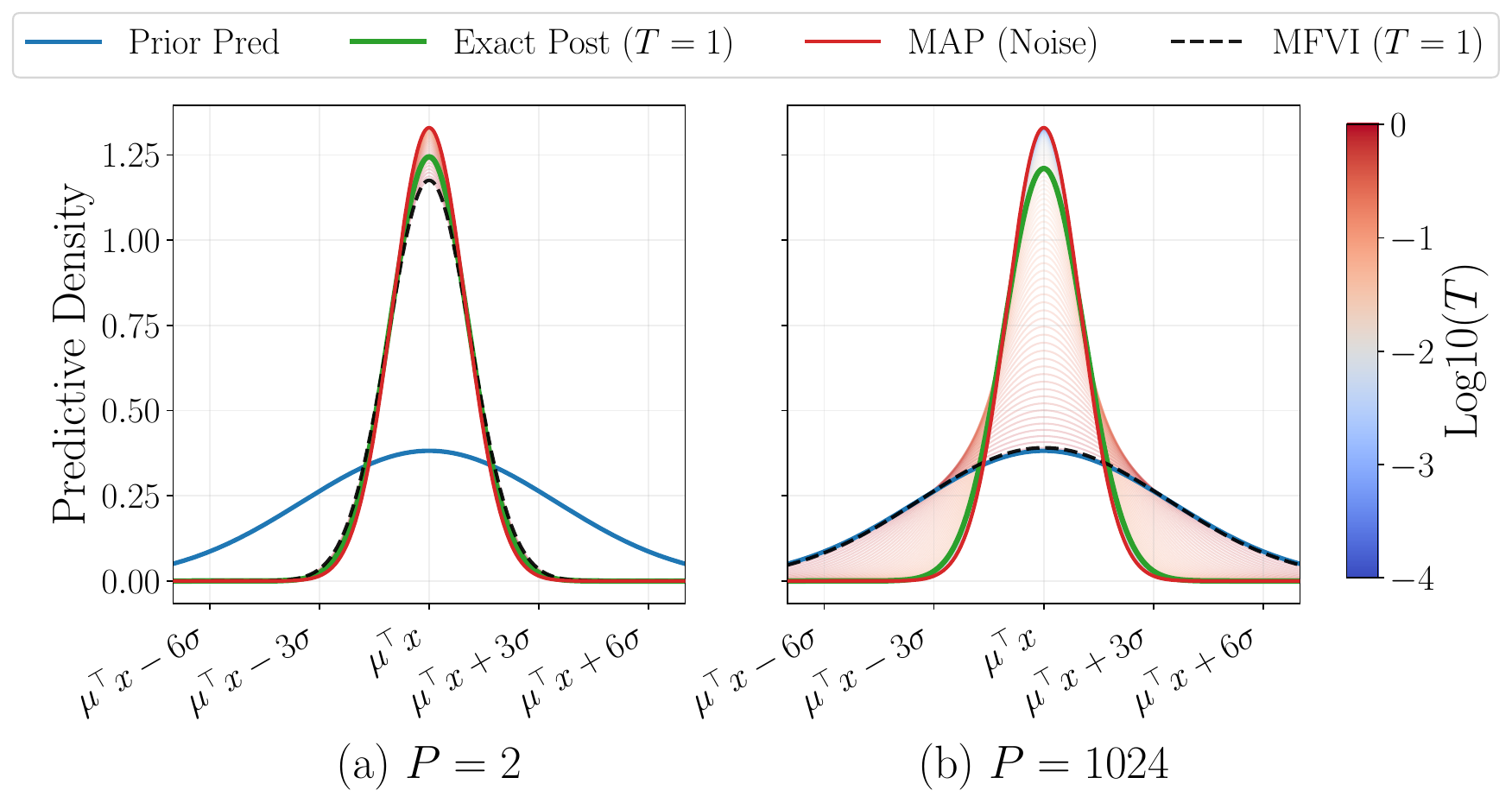}
\caption{Posterior predictive densities at an in-distribution test point $x = \frac{1}{\sqrt{P}} \mathbf{1}_P$ for (a) $P = 2$ and (b) $P = 1024$. 
For $P=2$, MFVI at $T=1$ (dashed line) closely matches the exact posterior (green), while for $P=1024$, MFVI at $T=1$ is nearly indistinguishable from the prior (blue), confirming Proposition~\ref{th:toy_mfvi_prior_pred}. 
In both cases, an appropriate cold temperature (light-coloured lines) recovers the exact posterior predictive. 
%Heavy lines show four key distributions: the prior predictive, the exact posterior, the MAP prediction, and MFVI at $T=1$. Light coloured lines show $T$-MFVI predictives for a range of temperatures. For $P = 2$, MFVI closely matches the exact posterior. For $P = 1024$, MFVI at $T=1$ is nearly indistinguishable from the prior, confirming Proposition~5.1. In both cases, an appropriate cold temperature recovers the exact posterior predictive.
}    \label{fig:pred-density}
\end{figure}

\begin{figure}[t]
    \centering
    \includegraphics[width=\linewidth]{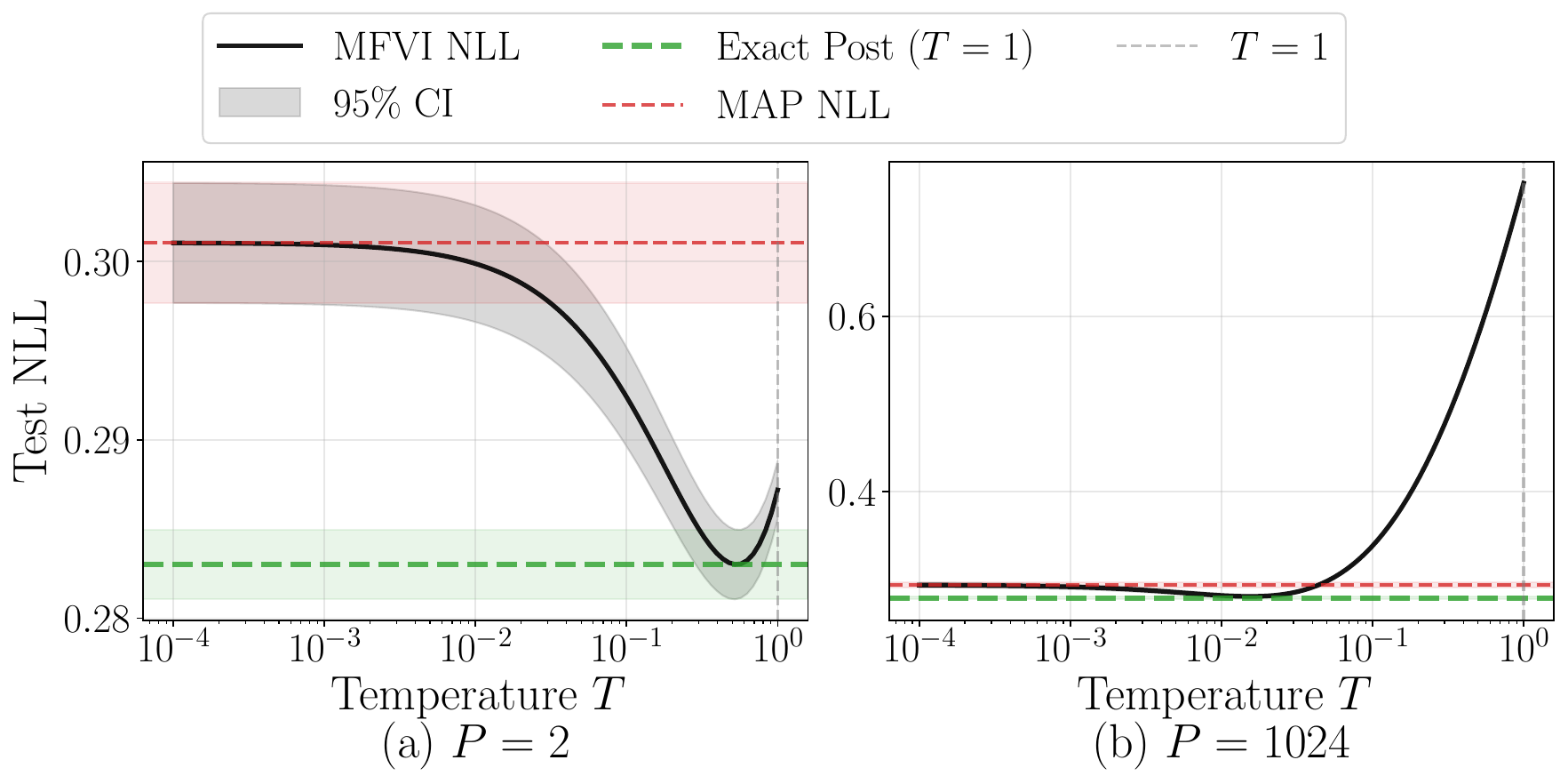}
    \caption{Test NLL as a function of temperature for ID\ test data, averaged over 10,000 repetitions, for (a) $P = 2$ and (b) $P = 1024$. Horizontal lines indicate the NLL of the exact posterior and the MAP solution. In both cases, an optimal temperature exists where the $T$-MFVI predictive matches the exact posterior. In high dimensions, this optimal temperature is much lower, the improvement from tuning $T$ is much larger, and the MFVI solution at $T=1$ performs far worse than the MAP.}
    \label{fig:nll-iid}
\end{figure}

\begin{figure}[t]
    \centering
    \includegraphics[width=\linewidth]{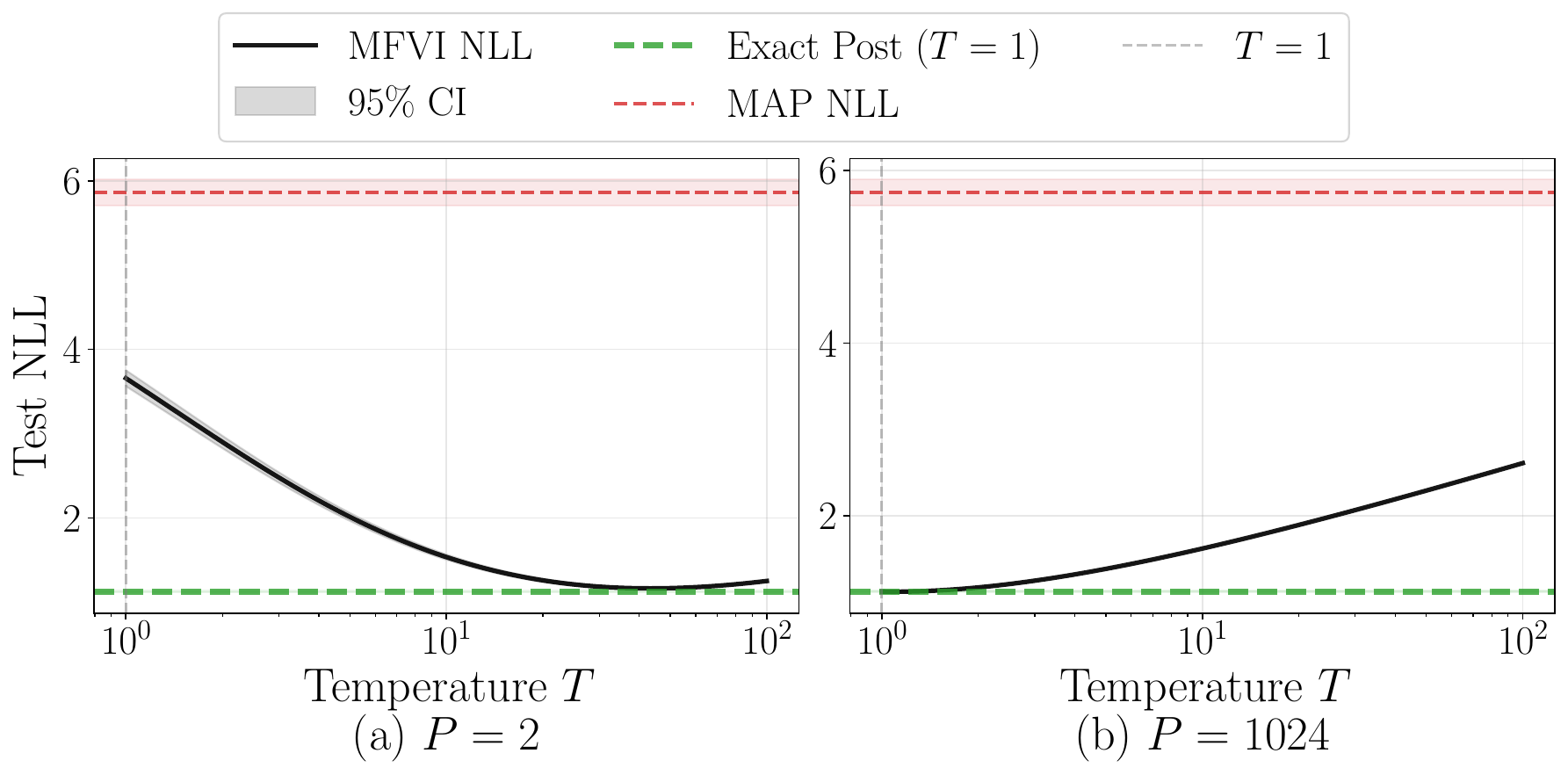}
    \caption{Test NLL as a function of temperature for OOD\ test data orthogonal to the training subspace, averaged over 10,000 repetitions, for (a) $P = 2$ and (b) $P = 1024$. For $P = 2$, the pattern is reversed compared to the ID\ case: warm posteriors ($T > 1$) improve performance. For $P = 1024$, MFVI at $T = 1$ already closely matches the exact posterior, so the optimal temperature is approximately $T \approx 1$.}
    \label{fig:nll-ood}
\end{figure}

We now examine an extreme case that makes the pathology from Section~\ref{sec:theory} maximally severe.\footnote{We make our code available at \url{https://github.com/jamesacodgers/mfvi-cpe}.} Consider training data confined to a non-axis-aligned rank-1 subspace: $x = \varepsilon \mathbf{1}_P$ where $\varepsilon \sim \mathcal{N}(0, \beta)$. This is a direct generalization of the illustrative example from Figure~\ref{fig:toy-illustration} to arbitrary dimension $P$. The setup ensures that only a single direction of the exact posterior is informed by the data, while the remaining $P-1$ directions retain the prior variance $\alpha^{-1}$. Figure~\ref{fig:toy-illustration} showed that MFVI overestimates predictive variance along the data direction $x_1 = x_2$; we now show that this overestimation becomes increasingly severe as the dimension grows, to the point where MFVI fails to reduce predictive variance compared to the prior at all.

\begin{proposition} \label{th:toy_mfvi_prior_pred}
    In this setting, as $P \to \infty$, for any test input $x\in \R^P$, the posterior predictive variance of the MFVI posterior will be given by 
    \begin{equation}
        x^\top S^* x = \alpha^{-1} || x ||_2^2,
    \end{equation}
    where $\alpha$ is the prior precision. 
\end{proposition}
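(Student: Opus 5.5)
Because the training inputs all lie on the line spanned by $\mathbf{1}_P$, the exact posterior can be written in closed form, and Lemma~\ref{th:MFVI-hm} then gives the MFVI variances explicitly. The plan is to compute these and let $P\to\infty$.

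\textbf{Exact posterior.} Write $u = \mathbf{1}_P/\sqrt{P}$. Each training input is $x_n = \varepsilon_n \mathbf{1}_P = \varepsilon_n\sqrt{P}\,u$, so $X^\top X = P\,(\sum_n \varepsilon_n^2)\, u u^\top$. The posterior precision is therefore
\[
\posterior^{-1} = \alpha\I + \lambda\, u u^\top, \qquad \lambda = \frac{P}{\likelihood}\sum_{n=1}^N \varepsilon_n^2 .
\]
This matrix has eigenvalue $\alpha+\lambda$ along $v_1=u$ and eigenvalue $\alpha$ on the orthogonal complement. So in the notation of Lemma~\ref{th:MFVI-hm}, $\delta_1 = (\alpha+\lambda)^{-1}$ and $\delta_q=\alpha^{-1}$ for $q\ge 2$.

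\textbf{MFVI posterior.} Lemma~\ref{th:MFVI-hm} gives $1/d_p^* = e_p^\top \posterior^{-1} e_p = \alpha + \lambda (u^\top e_p)^2 = \alpha + \lambda/P$. All coordinates are equivalent, so this value is the same for every $p$, and
\[
S^* = \frac{1}{\alpha + \lambda/P}\,\I = \frac{1}{\alpha + \frac{1}{\likelihood}\sum_n \varepsilon_n^2}\,\I .
\]
Hence, for any $x$, $x^\top S^* x = \|x\|_2^2/(\alpha + \frac{1}{\likelihood}\sum_n\varepsilon_n^2)$.

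\textbf{The limit.} This is the main obstacle: the factor $P$ in $\lambda$ is cancelled by the weight $w_{p1}=1/P$, so the data contribution $\lambda/P$ does not vanish for fixed $N$ and $\beta$. The limit $\alpha^{-1}\|x\|_2^2$ therefore requires the data contribution relative to $\alpha$ to vanish. I would handle this by fixing the scaling regime: the per-coordinate scale of the inputs must go to zero as $P$ grows. The natural choice is to normalize inputs to fixed total norm, i.e.\ $\beta\propto 1/P$, which is the same convention that puts the test point at $\mathbf{1}_P/\sqrt{P}$. Then $\lambda$ stays of order one while $\lambda/P = O(1/P)\to 0$. This gives
\[
x^\top S^* x = \frac{\|x\|_2^2}{\alpha + O(1/P)} \;\to\; \alpha^{-1}\|x\|_2^2 ,
\]
uniformly in the direction $x/\|x\|_2$.

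\textbf{Contrast with the exact posterior.} Under the same scaling, the exact predictive variance at $x = u$ is $\delta_1 = (\alpha+\lambda)^{-1}$, which stays strictly below $\alpha^{-1}$. So MFVI fails to shrink toward it, consistent with Lemma~\ref{th:calibrated-points}. In the final write-up I would state this scaling assumption explicitly, since the proposition does not hold without it.
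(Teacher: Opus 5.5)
Your argument is correct and is essentially the paper's proof. Both compute $1/d_p^* = e_p^\top\posterior^{-1}e_p = \frac{1}{P\delta_{\min}} + \frac{(P-1)\alpha}{P} = \alpha + \lambda/P$ using Lemma~\ref{th:MFVI-hm} and then let $P\to\infty$; you read this off directly from $\posterior^{-1}=\alpha\I+\lambda uu^\top$, while the paper uses the symmetry weights $w_{pq}=1/P$.

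Your scaling caveat is correct, and it is sharper than the paper. The paper's step $\frac{1}{P\delta_{\min}}\to 0$ tacitly holds $\delta_{\min}$ fixed as $P$ grows. Under the literal setup $x=\varepsilon\mathbf{1}_P$ with fixed $\beta$, instead, $P\delta_{\min}$ stays bounded and $S^*=(\alpha+\sum_n\varepsilon_n^2/\likelihood)^{-1}\I$ does not depend on $P$. So an assumption such as your $\beta\propto 1/P$ (equivalently, $\lambda$ bounded) is genuinely required.
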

The proof is in Appendix~\ref{sec:low-rank-proofs}, and the intuition is as follows: Each MFVI variance $d_p^*$ is a weighted harmonic mean of the exact posterior eigenvalues (Lemma~3.2). In our rank-1 setting, these eigenvalues consist of one small value $\delta_{\min}$ (corresponding to the data direction) and $P-1$ values equal to the prior variance $\alpha^{-1}$. As $P$ grows, the prior eigenvalues dominate the harmonic mean, and the information from the single data direction is diluted across all $P$ axis-aligned components.

Figure~\ref{fig:pred-density} illustrates this effect for in-distribution test data. We evaluate at a test point $x = \frac{1}{\sqrt{P}} \mathbf{1}_P$, which lies in the data subspace and has unit norm. The figure shows posterior predictive densities for $P = 2$ (left) and $P = 1024$ (right). We highlight four key distributions: the prior predictive (which has not seen any data), the exact posterior predictive, the MAP prediction (which has zero epistemic uncertainty), and the MFVI posterior predictive at $T=1$. For $P = 2$, the MFVI predictive closely matches the exact posterior: the approximation is working well. However, for $P = 1024$, the MFVI predictive at $T=1$ is nearly indistinguishable from the prior, despite the exact posterior remaining well-concentrated. This confirms the limiting behaviour of Proposition~\ref{th:toy_mfvi_prior_pred}: in high dimensions, MFVI has failed to incorporate the information from the training data into its predictions.

The coloured lines show $T$-MFVI predictives for a range of temperatures. As $T \to 0$, these interpolate smoothly from the MFVI solution toward the MAP prediction. Since the exact posterior predictive lies between these two extremes, there exists a critical temperature $T^* < 1$ for which the $T$-MFVI and exact posterior predictions match exactly. Crucially, because all training data lies on a single subspace, this temperature simultaneously corrects the predictive variance for \emph{all} in-distribution test points.

Figure~\ref{fig:nll-iid} shows the test negative log-likelihood (NLL) as a function of temperature, averaged over 10,000 repetitions. The test set is generated ID\ from the same distribution as the training data, so test points also lie in the $\mathbf{1}_P$ subspace. For both $P = 2$ and $P = 1024$, lowering the temperature initially improves performance until a minimum is reached, where the $T$-MFVI predictive closely matches the exact posterior. Beyond this point, further cooling degrades performance as the predictive approaches the MAP solution. 

Three key differences emerge between low and high dimensions. First, the optimal temperature for $P = 1024$ is much lower than for $P = 2$, reflecting the greater severity of the MFVI overestimation in high dimensions. Second, the potential improvement from tuning $T$ is far greater for $P = 1024$: the gap between MFVI at $T=1$ and the exact posterior is substantial, whereas for $P = 2$ it is small. Third, the penalty for setting $T$ too low is much smaller than for setting $T$ too high. In the limit $T \to 0$, the $T$-MFVI predictive converges to the MAP solution, which has zero epistemic uncertainty, while at $T = 1$ the MFVI predictive can approach the prior (as shown in Proposition~\ref{th:toy_mfvi_prior_pred}). For well-specified problems where the data is informative, the exact posterior variance lies much closer to zero than to the prior variance, so the MAP incurs a far smaller penalty than MFVI at $T=1$.

To emphasize that no single temperature is universally optimal, we now consider predictions for test points orthogonal to the training data. Specifically, we evaluate at $x = \frac{1}{\sqrt{P}} \mathbf{1}_P^{\pm}$, where $\mathbf{1}_P^{\pm}$ is a balanced binary vector with half the entries equal to $+1$ and half equal to $-1$. This direction is orthogonal to the training subspace, so the exact posterior does not reduce predictive uncertainty compared to the prior. Figure~\ref{fig:nll-ood} shows the test NLL for this OOD\ setting. For $P = 2$, the pattern from the ID\ case is reversed: warm posteriors ($T > 1$) now yield predictions closer to the exact posterior. This is because MFVI underestimates the predictive variance in directions orthogonal to the data, and increasing the temperature corrects this. For $P = 1024$, the picture is different: since there are so many directions orthogonal to the single data direction, and MFVI averages over all of them, the MFVI predictive variance in any one orthogonal direction is already close to the prior, and hence close to the exact posterior. The optimal temperature is therefore approximately $T \approx 1$. These results highlight that the optimal temperature for matching the posterior prediction is task-dependent and, for our setting at least, the  CPE arises as a natural method to match the exact Bayesian prediction on in-distribution prediction tasks.

\subsection{Basis function regression}
\label{sec:experiments}
\begin{figure}[t!]
    \centering
    \begin{subfigure}[t]{\linewidth}
        \centering
        \includegraphics[width=\linewidth]{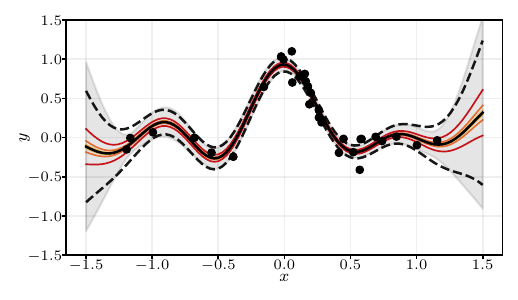}
        \subcaption{Low dimensional feature space $(Q=16)$}
        \label{fig:sinc-regression-16}
    \end{subfigure}%
    \\
    \begin{subfigure}[t]{\linewidth}
        \centering
        \includegraphics[width=\linewidth]{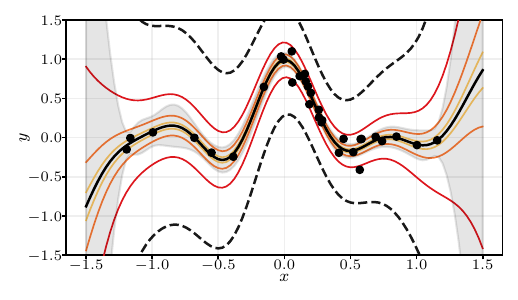}
        \includegraphics[width=0.95\linewidth]{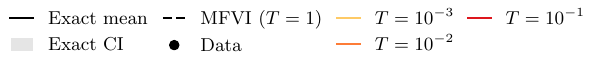}
        \subcaption{High dimensional feature space $(Q=1024)$}
        \label{fig:sinc-regression-1024}
    \end{subfigure}
    \caption{
    Predictions and credible intervals for fixed basis function regression with \textit{(a)} $Q = 16$  and \textit{(b)} $Q = 1024$  basis functions. MFVI with $T=1$ significantly overestimates the variance of the exact posterior on in-distribution data, and lower temperatures correct this.
    % Predictions and credible intervals for fixed basis function regression with $P=16$ and $P=1024$ basis functions. For $P=16$ the MFVI prediction does a reasonable job of matching the true posterior's predictive variance where the data is and the cold posteriors underestimate epistemic uncertainty, however for $P=1024$ the MFVI posterior with $T=1$ massively overestimates the predictive uncertainty and cold posteriors allow result in predictions which more closely match the true posterior. The third plot shows how the reciprocal of the eigenvalues of the exact posterior, $1/\delta_p$ decay as a function of the fraction of the total number of eigenvalues $p/P$. In the high number of basis functions case only a tiny fraction of the eigenvalues are not equal to the prior variance of $1$, so while in the low number of basis functions there is a relatively large fracion number of eigenvalues which differ significantly from the prior value, making the approximation better.
    }
    \label{fig:fixed-basis-fuction}
\end{figure}
% \begin{figure*}[t!]
%     \centering
%     \includegraphics[width=\linewidth]{figs/toy_illustration/fixed_basis_function_icml_v3.pdf}
%     \caption{Predictions and credible intervals for fixed basis function regression with $P=16$ and $P=1024$ basis functions. For $P=16$ the MFVI prediction does a reasonable job of matching the true posterior's predictive variance where the data is and the cold posteriors underestimate epistemic uncertainty, however for $P=1024$ the MFVI posterior with $T=1$ massively overestimates the predictive uncertainty and cold posteriors allow result in predictions which more closely match the true posterior. The third plot shows how the reciprocal of the eigenvalues of the exact posterior, $1/\delta_p$ decay as a function of the fraction of the total number of eigenvalues $p/P$. In the high number of basis functions case only a tiny fraction of the eigenvalues are not equal to the prior variance of $1$, so while in the low number of basis functions there is a relatively large fracion number of eigenvalues which differ significantly from the prior value, making the approximation better.}
%     \label{fig:fixed-basis-fuction}
% \end{figure*}

In order to illustrate that our results from linear regression can provide a useful intuition for general Machine Learning problems, we switch our setting to Basis Function Regression. Here, we assume that we have data $\data = \{X, Y\}$, but we want to find relationships which would be non-linear in the original space. This can be done by defining a function $\phi: \R^P \to \R^Q, \; q =1, \dots, Q $, and modelling the relationship between input and outputs by a weighted sum of these functions, so 
\begin{equation}
    y_n = \theta^\top \phi(x) + \epsilon. 
\end{equation}
This problem is identical to the linear regression one introduced in \cref{eq:linear-regression}, but with inputs $\phi(x)$ rather than $x$. This type of system would typically be fitted with kernel methods, and be described as a Gaussian Process (GP) \cite{10.7551/mitpress/3206.001.0001}. Here, we wish to see the effects of MFVI in the parameter space, so we restrict ourselves to finite values of $Q$, and calculate explicit values for $q(\theta)$, just as in the linear regression case. For our experiments, the function $\phi(\cdot)$ can either be fixed or have hyperparameters which can be learned by maximizing the marginal likelihood, as with any other GP. We use Radial Basis Functions (RBF) as our basis functions for our experiments, with the lengthscale and centroids as hyperparameters. See \cref{app:impl_dets} for more details.

\paragraph{Sinc toy data.} To illustrate the underconfidence pathology we fit a $\text{sinc}(x)$ function using RBF basis functions with fixed lengthscale $0.25$. This lengthscale is quite high compared to the spacing of the data, meaning that the inputs in feature space, $\phi(x)$,  are close together and the empirical input covariance will have very different eigenvalues. We fit the data twice: once with $Q=16$ centroids, and once with $Q=1024$ and show the results in \cref{fig:fixed-basis-fuction}. The true posterior prediction is very similar for both of these, however, the MFVI predictions differ markedly. The MFVI prediction with $Q=16$ matches the true posterior well, but the MFVI prediction with $Q=1024$ is highly overinflated, matching the intuition from \cref{sec:toy-example} that high-dimensional settings amplify the effect of posterior prediction mismatch. 

% \paragraph{UCI regression.} To illustrate our data in a slightly more realistic setting we use the UCI dataset \todo{cite}. Specifically we use the 9 datasets used by \todo{cite}. All of these datasets have likelihoods which can be approximated reasonably with Gaussians, making our results so far somewhat applicable, although we do not have access to the true prior or likelihood so we must now try and approximate these, which we do by maximizing the log-marginal-likelihood \todo{cite} of the exact posterior. 
% We then fit the exact posterior, and the MFVI posterior and use the fact that temperature is equivalent to scaling the variational posterior variance to calculate all $T$-exact and $T$-MFVI posteriors. \todo{I need to introduce $T$-exact.}

\paragraph{UCI regression.} To show how our theoretical insights translate to real-world problems, we consider several regression tasks from the UCI repository \citep{kelly2023uci}, specifically the set used in \citet{foong2019between}. Each data set is split into a train set, an in-distribution test set (ID Test)  and an out-of-distribution test set (OOD Test). Using $Q=500$ RBF basis functions, we train the hyperparameters as described above. We then analytically determine the mean and covariance of both the true and the $T$-MFVI posteriors (see \cref{equ:true_post,equ:t_mfvi_post}) as well as the associated posterior predictive distributions. We distinguish the joint posterior predictive for a test set, $(X, Y)$, and the marginal one, for a test point, $(x,y)$. The marginal true posterior predictive at an input, $x$, is given by
\begin{align}
    p(y \mid x, \mathcal{D})
    & = \mathcal{N}(y ; \mu^\top x, x^\top \Sigma x + \sigma^2 ),
\end{align}
with $\mu$ and $\Sigma$ from \cref{equ:true_post}. In addition to the marginal predictions we also consider the distance between the joint predictive distributions, where the covariances are $X\Sigma X^\top + \sigma^2 \I $ and $TX S^* X^\top + \sigma^2\I$ respectively.

% The marginal $T$-MFVI posterior predictive is 
% \begin{align}
%     q_T(y \mid x, \mathcal{D})
%     & = \mathcal{N}(y ; {m^*}^\topx, Tx^\topS^* x + \sigma^2 ),
% \end{align}
% with $m^*$ and $S^*$ from \cref{eq:mfvi_optimal_mean,eq:mfvi_optimal_cov}.
% For the joint posterior predictives, we have 
% \begin{align}
%     p(Y \mid X, \mathcal{D}) 
%     & = \mathcal{N}(Y ; X\mu, X\Sigma X^\top + \sigma^2 I ),
% \end{align}
% and
% \begin{align}
%     q_T(Y \mid X, \mathcal{D})
%     & = \mathcal{N}(y ; Xm^*, TXS^*X^\top + \sigma^2 I).
% \end{align}

We are interested in establishing which temperature $T^*$ most closely matches the exact Bayesian model. However, as discussed in \cref{sec:cold_posterior}, it is not clear how to decide how to measure the closeness between these distributions. For this reason we consider a number of statistical divergences between the true and the $T$-MFVI posterior predictives: forward KL $(\KL{p(y| x, \data)}{q_T(y| x )})$, reverse KL $(\KL{q_T(y| x )}{p(y| x, \data)})$, $\alpha$ divergences $(D_\alpha$, for $\alpha = 0.5)$, Wasserstein-2 distance $(W_2({q_T(y| x )},{p(y| x, \data)}))$, and the distance between predictive variances as measured by the Frobenius norm ($\|\Sigma_p - \Sigma_{q_T}\|_F^2$). The temperature  $T^*$ minimising a given divergence is recorded for $15$ independent train, ID test and OOD\ test splits of the data, and shown in \cref{fig:opt_T}. 

A clear pattern across all divergences and all datasets can be observed, where for the training and ID test sets, all measures of divergence are minimised by $T<1$. 
Meanwhile, predictions on the OOD test set always require higher temperatures than the ID test set, and, while the precise temperature will depend on the divergence used and distribution of training and OOD test points, often take values of $T>1$.
This pattern is nicely in line with our main theoretical result, \cref{th:empirical_pred_var}, stating that the MFVI prediction will overestimate predictive uncertainty on the training data, but must underestimate predictive variance in directions with less variance in the training data to maintain the calibration criterion we give in Lemma~\ref{th:calibrated-points}.

\Cref{app:impl_dets} contains additional details on the divergences, data pre-processing and hyperparameter selection. % via the marginal likelihood.

\begin{figure}[t!]
    \centering  
  \includegraphics[width=\linewidth]{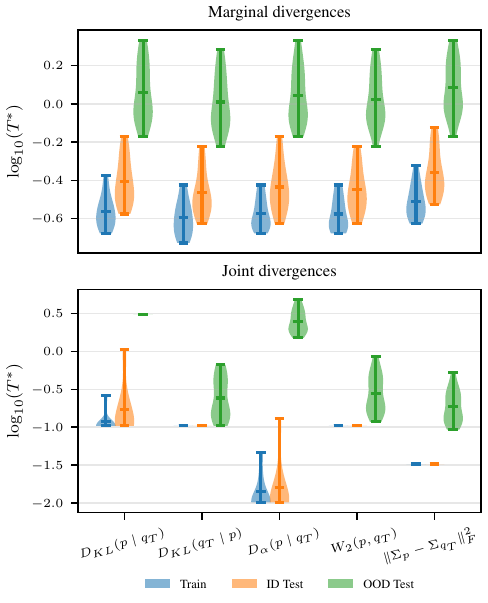}
    \caption{The temperature minimising a given divergence between the true posterior predictive, $p(\cdot| \cdot, \data)$, and the $T$-MFVI predictive, $q_T(\cdot)$, on the train, ID, and OOD test sets. It can be seen that the optimal temperatures for the train and ID test set are significantly lower than for the OOD test set. For the marginal divergences, the OOD points frequently require a warm posterior ($T>1$), while predictions at the train and ID test points benefit from a cold posterior ($T<1$). For the joint divergences, this ordering of $T^*$ across the three sets is preserved. Results are shown on the UCI kin8nm data set with $Q = 500$ fixed RBF basis functions.}
    \label{fig:opt_T}
\end{figure}

\subsection{MFVI underconfidence in BNNs}

\begin{figure}[t!]
    \centering
    \begin{subfigure}[t]{\linewidth}
        \centering
        \includegraphics[width=\textwidth]{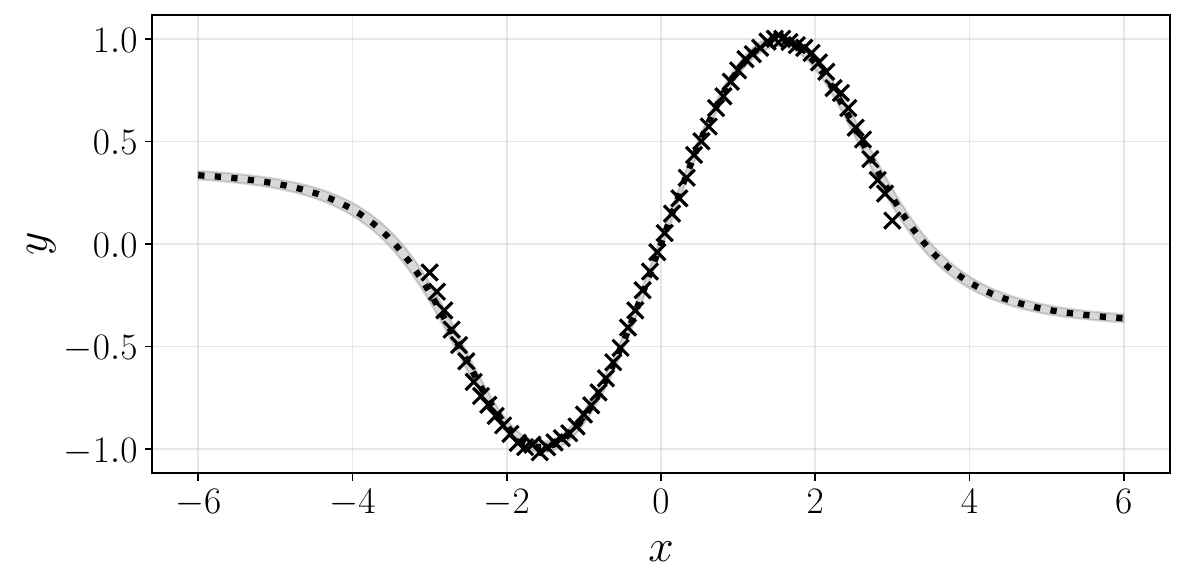}
        \subcaption{Small BNN (depth $2$, width $16$)}
        \label{fig:ivon-small}
    \end{subfigure}
    \begin{subfigure}[t]{\linewidth}
        \centering
        \includegraphics[width=\textwidth]{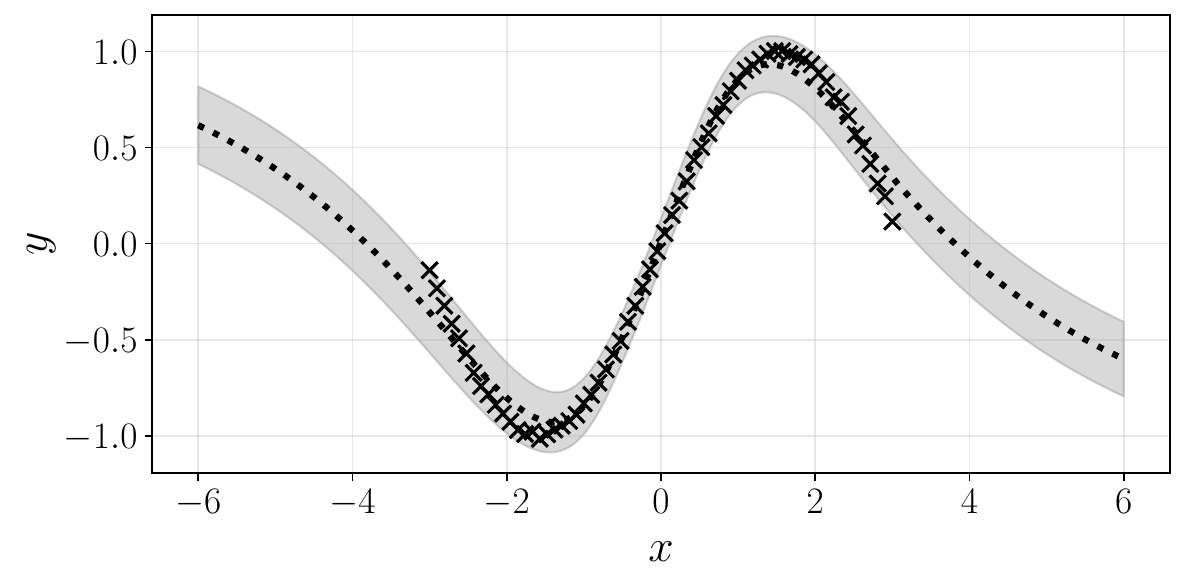}
        \subcaption{Larger BNN (depth $2$, width $512$)}
        \label{fig:ivon-large}
    \end{subfigure}%
    \caption{
    Plot showing the mean and credible interval for a small (\cref{fig:ivon-small}), and large (\cref{fig:ivon-large}) BNN trained with IVON. 
    Similar to the basis function regression in Figure~\ref{fig:fixed-basis-fuction}, as the number of parameters of the BNN increases the MFVI model becomes less confident over the region where the data was observed. 
    % \textcolor{red}{Change the CI color}
    }
    \label{fig:bnn}
\end{figure}

While our paper has focused on the linear setting, the CPE is primarily of interest in Bayesian Deep Learning (BDL). A natural question is therefore whether the results we identify here also hold in BNNs. While a complete investigation of BNNs is beyond the scope of this work, there are some conceptual similarities between the setting we have studied and BDL, which we lay out here. 

The key insight from our paper is that MFVI will overestimate predictive variance on ID data and underestimate predictive variance on OOD data in the linear regression setting. Both of these effects have previously been observed in MFVI-BNNs for special settings: \citet{foong2019between} showed that MFVI-BNNs' predictive variance underestimates the true posterior in certain OOD inputs, and \citet{coker2022wide} showed that certain MFVI-BNNs' predictions on ID data revert to the prior as the network width increases.

Additionally, a key prediction of our theory is that the MFVI over- and underestimation of predictive variance will be worse when the regression takes place in a high-dimensional parameter space, but the data concentrates near a low-dimensional subspace. 
% This follows from the calibration rule we give in  Lemma~\ref{th:calibrated-points}: the average ratio of MFVI to exact predictive variance across posterior eigenvectors is fixed to one, so as the number of uninformed directions grows, the overestimation along the few data-informed directions must become increasingly severe to compensate. 
It is revealing to consider the loss landscape implied by this structure. The Hessian of the log-posterior will be nearly flat in most directions, controlled only by the prior, and sharp in only the few directions where the data provides information. This is precisely the Hessian structure of the loss that has been observed in deep learning \cite{sagun2016eigenvalues}. 

To provide preliminary evidence that this mechanism operates beyond conjugate models, we train two BNNs on 1D sinusoidal data using IVON \cite{shen2024variational}, an MFVI optimizer for deep learning. Figure~\ref{fig:bnn} compares a small network (depth 2, width 16) with a larger network (depth 2, width 512). 
% Both networks were fit with full batch training, used a small learning rate ($1^{-6}$) and were trained for a long time to reach convergence ($10^6$ steps). 
Despite the means of both networks fitting the data reasonably well, the larger network exhibits substantially wider credible intervals. This mirrors the basis function regression result in Figure~\ref{fig:fixed-basis-fuction}: increasing the dimensionality of the parameter space while keeping the data fixed leads to a higher predictive variance. We believe that the overestimation of predictive variance on in-distribution data described here may explain the success of recently developed non-mean field variational posteriors \citep{fadel2025viking}. 

\section{Conclusion}

% \begin{figure}
%     \centering
%     \includegraphics[width=\linewidth]{figs/toy_illustration/bnn_ivon_arch_comparison.pdf}
%     \caption{Figure showing the predictions from IVON on a toy dataset. As with the toy basis function regression, as the number of parameters is increased, the over-inflation of the posterior predictive distribution appears to increase, suggesting that these results may generalize to the non-conjugate case.}
%     \label{fig:placeholder}
% \end{figure}

% An important fact to note about the analysis presented here is that it is \textit{not} equivalent to proving MFVI will make underconfident predictions on $ID$ training and test data, as our analysis ignores finite sample errors in the estimated input covariance. This effect could be ignored by considering the case where the number of data points tends to infinity, however in that case the epistemic uncertainty would shrink to nothing, making the results meaningless. Research specifically addressing the finite element errors would significantly strengthen the theory presented here. 

In our paper, we have compared the predictions from the MFVI posterior and the exact posterior in conjugate linear regression. We have provided both theoretical and empirical evidence that, while MFVI shrinks variance in parameter space and for OOD predictions, MFVI can inflate predictive variance on ID data, which can lead to a strong CPE when data is constrained close to a low-rank linear subspace. While it is very common for approximate inference algorithms to assess predictions on both ID and OOD data \citep[e.g.,][]{shen2024variational, fadel2025viking}, much of the theoretical analysis of variational inference focuses on the parameter space and ignores the predictive behavior \citep{wainwright2008graphical,margossian2023shrinkage, margossian2024variational,  margossian2025variational, zellinger2026even, marks2026symmetry}. We show that this analysis can miss important effects when the inputs are highly structured, and we hope that our work can influence other researchers to analyze the impacts of approximate Bayesian algorithms on predictions at different input locations.

\section*{Impact Statement}
This paper presents work whose goal is to advance the field of machine learning. There are many potential societal consequences of our work, none of which we feel must be specifically highlighted here.

\subsection*{Acknowledgments}

We thank Thomas M\"{o}llenhoff and Emtiyaz Khan for helpful discussions.
VF was supported by the Branco Weiss Fellowship.

\bibliographystyle{abbrvnat}
\bibliography{ref2}

\appendix
\onecolumn

\section{Proofs of results}
\label{ap:proofs}

\subsection{Optimal Variational Parameters} \label{ap:variational-proofs}
In this section we restate and  prove the optimal variational parameters. 

\begin{lemma*}[Optimal variational mean] 
    For any positive definite posterior covariance, the optimal mean is the mean of the posterior: 
    \begin{equation} 
        m^* = \mu.
    \end{equation}
\end{lemma*}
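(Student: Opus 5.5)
The plan is to isolate the dependence of the objective on the mean and minimise that part directly. In the reverse KL expression given earlier, the variational mean $m$ enters only through the quadratic term $\frac{1}{2}(\mu - m)^\top \posterior^{-1}(\mu - m)$. The remaining terms, $\Tr{\posterior^{-1}S} - \log\det S + \log\det\posterior - P$, depend only on the covariance $S$.

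The mean-field restriction is a constraint on $S$ alone: $S$ must be diagonal with entries $d_p>0$, while $m \in \R^P$ is left unconstrained. The joint minimisation over $(m,\{d_p\})$ therefore separates into two independent problems, one over $m$ and one over $\{d_p\}$. The optimal $m$ is then the minimiser of $(\mu - m)^\top \posterior^{-1}(\mu - m)$ over all of $\R^P$, whatever $S$ is.

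Since $\posterior$ is positive definite, $\posterior^{-1}$ is positive definite as well. Hence $(\mu-m)^\top\posterior^{-1}(\mu-m) \geq 0$, with equality if and only if $\mu - m = 0$. The minimum is therefore attained uniquely at $m^* = \mu$. Equivalently, one can take the gradient $-\posterior^{-1}(\mu - m)$, set it to zero, and use invertibility of $\posterior^{-1}$; the Hessian $\posterior^{-1}$ is positive definite, so this stationary point is the global minimiser.

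I do not expect a real obstacle. The one point to state carefully is the separability: the factorisation constraint does not couple $m$ to $S$, so optimising $m$ for any fixed $S$ gives the same answer. Positive definiteness of $\posterior$ is exactly what makes the minimiser unique, which is why the lemma assumes it.
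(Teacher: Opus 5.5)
Your proposal is correct and follows the paper's argument: $m$ enters the KL only through the quadratic form $(\mu-m)^\top\Sigma^{-1}(\mu-m)$. Positive definiteness of $\Sigma^{-1}$ makes this form nonnegative, and it vanishes exactly at $m=\mu$. Your explicit remarks that the mean-field constraint acts only on $S$ (so the optimisation over $m$ separates) and that strict positive definiteness makes the minimiser unique are small clarifications the paper leaves implicit.
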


\begin{proof}[Proof of Lemma~\ref{lemma:optimal_var_mean}]
    If $\posterior$ is positive definite, then $\posterior^{-1}$ is also positive definite. By definition 
    \begin{equation}
        z^\top  \posterior^{-1}z \geq 0,
    \end{equation}
    with the minimum being achieved when $z = 0$. This will be achieved when $m = \mu$.
\end{proof}

\begin{lemma*} [Optimal variational posterior eigenvalues are harmonic means] 
    The optimal variational covariance, $S^*$, is given by the condition
    \begin{equation}
        \frac{1}{d_p^*} = {e_p^\top \posterior^{-1} e_p} = {\sum_{q=1}^P \frac{w_{pq}}{\delta_q}} \quad \forall \; p \in \{1, \dots P\},
    \end{equation}
    where $\sum_{q=1}^P w_{pq} = 1 \; \forall \; p \in \{1, \dots, P\}$.
\end{lemma*}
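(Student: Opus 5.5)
We prove the statement by minimising the KL objective directly over the diagonal entries $d_p$, with the mean already fixed at $m^*=\mu$ by Lemma~\ref{lemma:optimal_var_mean}. With $S=\sum_p d_p e_p e_p^\top$, the only terms of $\KL{q(\theta)}{p(\theta|\data)}$ that depend on the $d_p$ are
\begin{equation*}
\tfrac12\Big[\Tr{\posterior^{-1}S} - \log\det S\Big] = \tfrac12\sum_{p=1}^P\Big[d_p\, e_p^\top \posterior^{-1} e_p - \log d_p\Big],
\end{equation*}
where we use $\Tr{\posterior^{-1}S}=\sum_p d_p (\posterior^{-1})_{pp}$ and $\log\det S=\sum_p\log d_p$. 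The objective therefore separates into $P$ independent one-dimensional problems of the form $f(d)=a d-\log d$ on $d>0$, with $a=e_p^\top\posterior^{-1}e_p$.

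Each such problem is easy. Since $\posterior^{-1}$ is positive definite, $a>0$, so $f$ is strictly convex, diverges as $d\to0^+$ and as $d\to\infty$, and has its unique stationary point at $f'(d)=a-1/d=0$. This gives $1/d_p^*=e_p^\top\posterior^{-1}e_p$, which is the first equality in the statement.

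For the second equality, write $\posterior^{-1}=V\Delta^{-1}V^\top=\sum_q \delta_q^{-1}v_qv_q^\top$. This holds because inversion preserves the orthonormal eigenbasis and inverts the eigenvalues. Then $e_p^\top\posterior^{-1}e_p=\sum_q (v_q^\top e_p)^2/\delta_q=\sum_q w_{pq}/\delta_q$.

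The normalisation $\sum_q w_{pq}=1$ follows from the orthonormal completeness of $V$. Since $\sum_q v_qv_q^\top=VV^\top=\I$, we get $\sum_q (v_q^\top e_p)^2=e_p^\top e_p=1$. The weights are also nonnegative, so $1/d_p^*$ is a convex combination of the $1/\delta_q$, and $d_p^*$ is a weighted harmonic mean of the $\delta_q$.

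None of these steps is a real obstacle. The only points needing care are to check that the mean term decouples, which it does because $m^*=\mu$ regardless of $S$, and to justify that the stationary point is the global minimiser, which strict convexity together with the boundary behaviour of $f$ settles.
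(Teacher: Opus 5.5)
Your proof is correct and follows essentially the same route as the paper: you isolate $\Tr{\posterior^{-1}S} - \log\det S$, split it over the diagonal entries $d_p$, set the derivative to zero, expand $\posterior^{-1}$ in its eigenbasis, and get the normalisation of $w_{pq}$ from the orthonormality of $V$. Your strict-convexity and boundary-behaviour check, which confirms that the stationary point is the global minimiser, is a small addition that the paper's proof leaves implicit.
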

\begin{proof}[Proof of Lemma~\ref{th:MFVI-hm}]
    Extracting the terms which depend on $S$ from the objective, we are trying to minimise 
    \begin{equation}
        F = \Tr{\posterior^{-1}S} - \log \det S. 
    \end{equation}
    These terms can be rewritten in terms of the eigen basis and the diagonal covariances of S as 
    \begin{equation}
        \Tr{\posterior^{-1}S}  = \sum_{p=1}^P d_p e_p^\top  \posterior^{-1} e_p.
    \end{equation}
    Assuming that we restrict $d_p > 0$
    \begin{equation}
        \log \det S = \sum_{p=1}^P \log d_p.
    \end{equation}
    Substituting these into $F$ gives
    \begin{equation}
        F = \sum_{p=1}^P d_p e_p^\top  \posterior^{-1} e_p - \log d_p.
    \end{equation}
    Taking the derivative wrt $d_p$ and setting this equal to zero gives 
    \begin{equation}
        e_p^\top  \posterior^{-1}e_p = \frac{1}{d^*_p}.
    \end{equation}

    For the second equality we need to substitute the diagonalised form of the true posterior, so 
    \begin{equation}
        e_p^\top  \posterior^{-1}e_p = \sum_{q=1}^P e_p^\top  v_q v_q^\top  e_p \delta_q^{-1} = \sum_{q=1}^P \underbrace{(e_p^\top  v_q)^2 }_{w_{pq}}\delta_q^{-1}. 
    \end{equation}
    We define $w_{pq} = (e_p^\top  v_q)^2$, which must sum to one, as it is the definition of the $L_2$ norm squared of the vector $e_p$ which is defined to be equal to one. We provide a formal proof of this sum to one criterion 
    Lemma~\ref{th:w-sum-to-one}. 
    
\end{proof}

\begin{lemma} \label{th:w-sum-to-one}
For any two orthonormal bases, $\{e_p\}_{p=1}^P$ and $\{v_q\}_{q=1}^P$, the weights defined  by the squared inner product between the vectors, $w_{pq} = (e_p^\top v_q)^2$, obey the equality
\begin{equation}
    \sum_{p=1}^P w_{pq} = \sum_{q=1}^P w_{pq} = 1 . 
\end{equation}
\end{lemma}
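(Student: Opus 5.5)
The plan is to use orthonormality of the two bases directly. Collect the vectors $v_q$ as the columns of $V$, so $V$ is orthogonal and $V^\top V = VV^\top = \I_P$. Then $e_p^\top v_q = V_{pq}$, the $(p,q)$ entry of $V$, and $w_{pq} = V_{pq}^2$. With this notation the two equalities are statements about the entries of an orthogonal matrix: the squared entries in each row sum to one, and so do the squared entries in each column.

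For the sum over $q$ with $p$ fixed, expand $e_p$ in the basis $\{v_q\}$:
\begin{equation*}
e_p = \sum_{q=1}^P (v_q^\top e_p)\, v_q .
\end{equation*}
This is the resolution of the identity $\sum_q v_q v_q^\top = VV^\top = \I_P$. Taking the squared norm and using orthonormality of the $v_q$ (Parseval) gives
\begin{equation*}
1 = \|e_p\|_2^2 = \sum_{q=1}^P (v_q^\top e_p)^2 = \sum_{q=1}^P w_{pq}.
\end{equation*}
Equivalently, this is the $(p,p)$ entry of $VV^\top = \I_P$.

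For the sum over $p$ with $q$ fixed, the argument is symmetric. Expand $v_q$ in the canonical basis, $v_q = \sum_p (e_p^\top v_q)\, e_p$, and compute
\begin{equation*}
1 = \|v_q\|_2^2 = \sum_{p=1}^P (e_p^\top v_q)^2 = \sum_{p=1}^P w_{pq}.
\end{equation*}
This is the $(q,q)$ entry of $V^\top V = \I_P$. The argument uses only that each basis is orthonormal, so it holds for any two orthonormal bases, as the lemma states.

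There is no real obstacle. The one point needing care is completeness: the row-sum identity requires $\{v_q\}$ to span $\R^P$ and not just be orthonormal. This is guaranteed because the paper takes $V$ to be a complete orthonormal eigenbasis of $\posterior$.
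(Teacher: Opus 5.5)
Your proof is correct and essentially matches the paper's, which likewise expands a vector $a$ in an orthonormal basis and uses $\|a\|_2^2 = \sum_k (a^\top z_k)^2$, once with $a = e_p$ in the basis $\{v_q\}$ and once with $a = v_q$ in the basis $\{e_p\}$. Your restatement via the squared row and column sums of an orthogonal matrix is an equivalent reformulation, and your completeness caveat holds automatically, since $P$ orthonormal vectors in $\R^P$ always form a basis.
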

\begin{proof}
    Consider a vector $a$ and any orthonormal basis $\{z_k\}_{k=1}^P$. The vector $a$ can be written as 
    $a = \sum_{k=1}^P (a^\top z_k) z_k $, 
    where the inner product $(a^\top z_k)$ defines the magnitude of the vector along each of the orthogonal directions of the basis.
    From this it is clear that the squared L2 distance is given, by definition, as 
    \begin{equation}
       ||a||^2_2 = \sum_{k=1}^P (a^\top z_k)^2. 
    \end{equation}
    
    Substituting $a = e_p $ and $\{v_q\}_{q=1}^P = \{z_k\}_{k=1}^P$, or $a = v_q $ and $\{e_p\}_{p=1}^P = \{z_k\}_{k=1}^P$ give the two desired equalities. 
\end{proof}

\subsection{MFVI predictions underestimate uncertainty for isotropic test points}
\label{sec:detailed-isotropic}
This section gives a detailed proof that the expected predicted variance for the MFVI posterior is less than that of the exact posterior for a test point distributed as $x\sim N(0, \I_P)$.

One way to prove this is to first  consider the predictive distribution  when the test point is equal to an eigenvector of the MFVI covariance, \textit{i.e.} at $x = e_p$ for any $p=1, \dots,P$. At any one of these points, the variational posterior will underestimate the epistemic uncertainty of the exact posterior. Formally, we can state the following Lemma.
\begin{lemma}[Variance Underestimation at MFVI basis vectors]\label{th:MFVI-eb-oc} For any canonical basis vector $e_p, \; p \in \{1, \dots, P\}$ the predictive variances are governed by 
    \begin{equation}
        e_p^\top \posterior e_p  \geq e_p^\top S^* e_p. 
    \end{equation}
\end{lemma}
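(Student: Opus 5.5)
We need to show $e_p^\top \posterior e_p \geq d_p^*$. The plan is to write both sides in the eigenbasis of $\posterior$ and compare an arithmetic mean with a harmonic mean of the eigenvalues $\delta_q$, using the same weights $w_{pq}$.

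\textbf{Step 1 (left-hand side as an arithmetic mean).} Using $\posterior = \sum_q \delta_q v_q v_q^\top$ gives
\begin{equation*}
e_p^\top \posterior e_p = \sum_{q=1}^P (e_p^\top v_q)^2 \delta_q = \sum_{q=1}^P w_{pq}\,\delta_q .
\end{equation*}
By Lemma~\ref{th:w-sum-to-one}, $w_{pq}\ge 0$ and $\sum_q w_{pq}=1$. So this quantity is a weighted arithmetic mean of the exact posterior eigenvalues.

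\textbf{Step 2 (right-hand side as a harmonic mean).} Since $S^*$ is diagonal in the canonical basis, $e_p^\top S^* e_p = d_p^*$. By Lemma~\ref{th:MFVI-hm},
\begin{equation*}
d_p^* = \Bigl(\sum_{q=1}^P w_{pq}/\delta_q\Bigr)^{-1},
\end{equation*}
which is the weighted harmonic mean of the same eigenvalues with the same weights.

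\textbf{Step 3 (AM--HM inequality).} It remains to show
\begin{equation*}
\Bigl(\sum_q w_{pq}\delta_q\Bigr)\Bigl(\sum_q w_{pq}\delta_q^{-1}\Bigr)\geq 1 .
\end{equation*}
Apply Cauchy--Schwarz to the vectors with entries $\sqrt{w_{pq}\delta_q}$ and $\sqrt{w_{pq}/\delta_q}$. Their inner product is $\sum_q w_{pq} = 1$, and squaring gives exactly the required bound. Alternatively, the bound follows from Jensen's inequality, since $t\mapsto 1/t$ is convex on $\R^+$. Rearranging yields $e_p^\top\posterior e_p \ge d_p^*$.

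The argument has no real obstacle. The only care needed is to note that $\delta_q>0$, which holds because $\posterior$ is positive definite, and that the weights in the two means coincide. Equality holds exactly when all $\delta_q$ with $w_{pq}>0$ are equal, i.e.\ when $e_p$ lies in a single eigenspace of $\posterior$. Summing the inequality over $p$ then gives the trace inequality used for Lemma~\ref{th:isotropic-pred-var}.
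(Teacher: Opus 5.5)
Your proof is correct and follows the paper's own argument: both write $e_p^\top \Sigma e_p$ as the $w_{pq}$-weighted arithmetic mean and $d_p^*$ as the weighted harmonic mean of the eigenvalues $\delta_q$, then apply the AM--HM inequality. You also prove AM--HM via Cauchy--Schwarz and give the equality condition, whereas the paper simply cites the inequality as well known.
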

\begin{proof}[Proof of Lemma~\ref{th:MFVI-eb-oc}]
For the true posterior, the predictive variance is given by
    \begin{equation}
        e_p^\top  \posterior e_p = \sum_{q=1}^P w_{pq} \delta_q,
    \end{equation}
which is the Arithmetic Mean (AM) of the eigenvalues of the true posterior with weights $\{w_{pq}\}_{q=1}^P$. 

For the MFVI posterior, the predictive variance is given by 
    \begin{equation}
        e_p^\top  S^* e_p = d_p = \frac{1}{\sum_{q=1}^P w_{pq} \delta_{q}^{-1}},
    \end{equation}
which is the Harmonic Mean (HM) of the eigenvalues of the true posterior with the same weights. The well known AM-HM inequality gives the desired result. 
% \todo{Give a proof of the AM-HM inequality in the appendix.}
\end{proof}

By noting that the quadratic form $e_p^\top A e_p = A_{pp}$, with $A_{pp}$ being the $p$th diagonal element of the matrix $A$, the result above also easily extends to the following inequality for the trace.

\begin{lemma} [Trace Inequality.] \label{th:tr-inequality}
The trace of the MFVI posterior and the true posterior are governed by
\begin{equation}
    \Tr{\posterior} \geq \Tr{S^*}.
\end{equation}
\end{lemma}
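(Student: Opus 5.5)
The plan is to reduce the statement to a sum of diagonal entries and then apply Lemma~\ref{th:MFVI-eb-oc} term by term. For any symmetric matrix $A$ we have $\Tr{A} = \sum_{p=1}^P e_p^\top A e_p = \sum_{p=1}^P A_{pp}$, so both traces are sums of the predictive variances at the canonical basis vectors.

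For the exact posterior, $e_p^\top \posterior e_p = \posterior_{pp}$. For the MFVI posterior, $S^* = \sum_{p} d_p^* e_p e_p^\top$ is diagonal, so $e_p^\top S^* e_p = d_p^*$ and $\Tr{S^*} = \sum_p d_p^*$.

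Next I will invoke Lemma~\ref{th:MFVI-eb-oc}, which gives $e_p^\top \posterior e_p \geq e_p^\top S^* e_p$ for each $p \in \{1,\dots,P\}$. That lemma is itself the weighted AM--HM inequality, applied to the eigenvalues $\delta_q$ with weights $w_{pq}$; these weights sum to one by Lemma~\ref{th:w-sum-to-one}. Summing the $P$ inequalities over $p$ gives
\begin{equation*}
\Tr{\posterior} = \sum_{p=1}^P e_p^\top \posterior e_p \;\geq\; \sum_{p=1}^P e_p^\top S^* e_p = \Tr{S^*}.
\end{equation*}

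There is no real obstacle here. The only point needing care is to state explicitly that the trace is basis-independent, so that it may be computed in the canonical basis. Writing $\Tr{\posterior} = \sum_q \delta_q$ in the eigenbasis is therefore consistent with the canonical-basis sum. An alternative route is to check the weighted double sum directly: $\sum_p \sum_q w_{pq}\delta_q = \sum_q \delta_q$, using the column sums $\sum_p w_{pq} = 1$ from Lemma~\ref{th:w-sum-to-one}.
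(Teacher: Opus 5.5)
Your proposal is correct and matches the paper's proof: it also writes both traces as sums of the diagonal quadratic forms $e_p^\top \posterior e_p$ and $e_p^\top S^* e_p$, then sums the termwise inequality from Lemma~\ref{th:MFVI-eb-oc}. Your extra remarks on basis independence and the column-sum identity $\sum_p w_{pq} = 1$ are correct but not needed.
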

\begin{proof}[Proof of Lemma~\ref{th:tr-inequality}]
In the basis of the MFVI posterior, trace is given by 
\begin{equation}
    \Tr{\posterior} = \sum_{p=1}^P e_p^\top  \posterior e_p, \quad \Tr{S^*} = \sum_{p=1}^P e_p^\top  S^* e_p. 
\end{equation}
Each of these terms $p=1,\dots,P$ is governed by the inequality in Lemma~\ref{th:MFVI-eb-oc}, meaning each term in this sum can be bound by $e_p^\top  \posterior e_p  \geq e_p^\top  S^* e_p$, hence the sum of all these terms must also be bound. 
\end{proof}

From here, we can directly move to proving the isotropic variance overestimation result. 

\begin{lemma*}
    For a test point $x\sim N(0, \I_P)$, the predictive variances of the MFVI posterior and exact posterior are governed by 
    \begin{equation}
        \expectation{x\sim N(0, \I_P)}{x^\top \posterior x} \geq \expectation{x\sim N(0, \I_P)}{x^\top S^* x}
    \end{equation}
\end{lemma*}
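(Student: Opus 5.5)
The plan is to turn the expected predictive variance under the isotropic test distribution into a trace, and then invoke the trace inequality of Lemma~\ref{th:tr-inequality}. For any fixed symmetric matrix $A$ (in particular $A = \posterior$ or $A = S^*$), I will rewrite the quadratic form with the cyclic property, $x^\top A x = \Tr{A x x^\top}$. Because the trace is linear, the expectation passes inside the trace:
\begin{equation*}
\expectation{x\sim N(0,\I_P)}{x^\top A x} = \Tr{A\,\expectation{x\sim N(0,\I_P)}{x x^\top}}.
\end{equation*}
Since $x$ has zero mean, the second moment $\expectation{}{x x^\top}$ equals the covariance, which is $\I_P$. The expectation therefore reduces to $\Tr{A}$. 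This is the identity the main text already quotes, and here I only need to justify it.

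Applying this identity to $A=\posterior$ and to $A=S^*$, the claim becomes exactly $\Tr{\posterior}\geq\Tr{S^*}$, which is Lemma~\ref{th:tr-inequality}. That lemma rests on the diagonal-wise AM--HM comparison of Lemma~\ref{th:MFVI-eb-oc}, summed over $p$, so it is already established before this statement.

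There is no real obstacle; the argument is short. The only points to state explicitly are that both expectations are finite, which holds because the Gaussian has finite second moments, and that linearity lets the expectation commute with the trace. I will take these as routine.
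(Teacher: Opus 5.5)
Your proposal is correct and follows the paper's proof exactly: reduce both expectations to $\Tr{\posterior}$ and $\Tr{S^*}$ using $\mathbb{E}[x x^\top] = \I_P$, then apply the trace inequality of Lemma~\ref{th:tr-inequality}. The only addition is your explicit justification of the trace identity via the cyclic property and linearity of expectation, which the paper simply asserts.
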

\begin{proof}
By noting 
    \begin{equation}
        \expectation{x \sim N(0, \I_P)}{x^\top \posterior x} = \Tr{\posterior}, 
    \end{equation}
    \begin{equation}
    \expectation{x \sim N(0, \I_P)}{x^\top S^* x} = \Tr{S^*}, 
    \end{equation}
    the result follows directly from \ref{th:tr-inequality}.
\end{proof}

\subsection{MFVI overestimates uncertainty along the first principal component of the training data}

Here we provide the proof of Lemma~\ref{th:underconfident-pc}, with the argument relating this result to the distribution of the training inputs being given in the main text. 

\begin{lemma*}
(Overestimation of predictive variance in one direction of the input space.) For points in the input space defined by $ \tilde x \in \text{span}(v_{q^*})$  for some scalar, the predictive variance for the exact posterior and the MFVI posterior are governed by the inequality 
    \begin{equation}
        \tilde x ^\top S^* \tilde x  \geq \tilde x^\top \posterior \tilde x.
    \end{equation}
\end{lemma*}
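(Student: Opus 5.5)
We need to prove that for $\tilde x = c\, v_{q^*}$, where $\delta_{min} = v_{q^*}^\top \posterior v_{q^*}$ is the smallest eigenvalue of $\posterior$, we have $\tilde x^\top S^* \tilde x \geq \tilde x^\top \posterior \tilde x$. Both sides are quadratic in $c$, so the common factor $c^2 \ge 0$ can be dropped and it suffices to treat $\tilde x = v_{q^*}$. The exact side is immediate from the eigendecomposition: $v_{q^*}^\top \posterior v_{q^*} = \delta_{min}$.

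For the MFVI side, expand $v_{q^*}$ in the canonical basis. This gives
\begin{equation*}
v_{q^*}^\top S^* v_{q^*} = \sum_{p=1}^P d_p^* (e_p^\top v_{q^*})^2 = \sum_{p=1}^P w_{pq^*} d_p^*.
\end{equation*}
By Lemma~\ref{th:w-sum-to-one}, $\sum_{p} w_{pq^*} = 1$ (the column sum), so this is a convex combination of the $d_p^*$.

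It therefore suffices to show $d_p^* \geq \delta_{min}$ for every $p$. This comes from Lemma~\ref{th:MFVI-hm}, which gives
\begin{equation*}
\frac{1}{d_p^*} = \sum_{q=1}^P \frac{w_{pq}}{\delta_q} \le \frac{1}{\delta_{min}} \sum_{q=1}^P w_{pq} = \frac{1}{\delta_{min}},
\end{equation*}
using $\delta_q \ge \delta_{min} > 0$ and the row sum $\sum_q w_{pq} = 1$. In other words, a weighted harmonic mean is never below the minimum of its arguments. Inverting yields $d_p^* \ge \delta_{min}$. Combined with the convex combination above, $v_{q^*}^\top S^* v_{q^*} \ge \delta_{min}$, and multiplying back by $c^2$ gives the claim.

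No step is a genuine obstacle. The only points needing care are using the correct sum-to-one identity in each place: the column sum over $p$ for the expansion of $v_{q^*}$, and the row sum over $q$ for the harmonic mean. Positivity of all $\delta_q$, which holds because $\posterior$ is positive definite, is what justifies the inversion.
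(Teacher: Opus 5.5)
Your proposal is correct and follows the paper's own proof essentially step for step. Like the paper, you reduce to $c=1$, write $v_{q^*}^\top S^* v_{q^*}=\sum_p w_{pq^*} d_p^*$, bound each harmonic-mean $d_p^*$ below by $\delta_{min}$ using the row sum $\sum_q w_{pq}=1$, and then collapse with the column sum $\sum_p w_{pq^*}=1$.
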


\begin{proof}[Proof of Lemma~\ref{th:underconfident-pc}]
Let $\tilde x  = c v_{q^*}$, for some constant $c$. For $c=0$ the equality holds trivially. 
For any non-zero constant $c$, the predictive variance of the two posteriors are given by 
\begin{equation}
     \tilde x^\top S^* \tilde x = c^2 v_{q^*}^\top  S v_{q^*} \quad \text{and}\quad \tilde x ^\top  \posterior \tilde x = c^2 v_{q^*}^\top  \posterior v_{q^*},
\end{equation}
so proving the result for $c=1$ gives the result for all other non zero values of $c$. 

The predictive variance of the MFVI posterior is given by 
\begin{equation}
    v_{q^*} ^\top  S^* v_{q^*} = v_{q^*}^\top  \left(\sum_{p=1}^P e_p e_p^\top  d_p\right)v_{q^*} = \sum_{p=1}^P (v_{q^*}^\top  e_p)^2 d_p. 
\end{equation}
Note that $(v_{q^*}^\top  e_p)^2= w_{pq^*}$ which is the weighted term relating the MFVI posterior eigenvalues to the exact posterior eigenvalues. Making this substitution, along with the values of the eigenvalues of $d_p$ from Lemma~\ref{th:MFVI-hm}, we can use 
\begin{equation}
    v_{q^*} ^\top  S^* v_{q^*}  = \sum_{p=1}^P  \frac{w_{pq^* }}{ \sum_{q=1}^P w_{pq} \delta_q^{-1} }.
\end{equation}
As we have defined $\delta_{q^*}$ to be the minimum eigenvalue, we can use the inequality 
\begin{equation}
     v_{q^*} ^\top  S^* v_{q^*}  \geq \sum_{p=1}^P  \frac{w_{pq^* }}{ \sum_{q=1}^P w_{pq} \delta_{q^*}^{-1} } = \sum_{p=1}^P  \frac{w_{pq^* }}{  \delta_{q^*}^{-1} }
     = \sum_{p=1}^P  w_{pq^*} \delta_{q^*} = \delta_{q^*}
      = v_{q^*} \posterior v_{q^*}.
\end{equation}

This proves the statement for $c=1$, hence giving the result for all non-zero c. 

\end{proof}

Following this we can simply show the overestimation of predictive variance along the first principal component. 

\begin{theorem*}
Consider the conjugate Bayesian linear model of \cref{eq:linear-regression} with a spherical prior $\theta \sim N(0, \prior)$, and let $w$ denote the first principal component of the training inputs $X$. Then for any test point $\tilde x \in \text{span}(w)$, the predictive variances of the MFVI and exact posteriors satisfy
\begin{equation}
    \tilde x^\top S^* \tilde x \;\geq\; \tilde x^\top \posterior \tilde x.
\end{equation}
\end{theorem*}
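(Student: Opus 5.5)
The plan is to reduce the statement to Lemma~\ref{th:underconfident-pc}. That lemma already gives the inequality for every $\tilde x \in \text{span}(v_{q^*})$, where $v_{q^*}$ is the eigenvector of $\posterior$ with the minimal eigenvalue $\deltamin$. So it is enough to show that, under a spherical prior, the first principal component $w$ of the training inputs spans the same line as $v_{q^*}$. Then $\text{span}(w) = \text{span}(v_{q^*})$ and the lemma applies directly.

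First, I would diagonalise the empirical Gram matrix. Since the data are mean-centred, the empirical covariance is $\hat\Gamma = \frac{1}{N}X^\top X$. Being symmetric positive semidefinite, it has the decomposition $\hat\Gamma = \sum_{q=1}^P \gamma_q u_q u_q^\top$ with an orthonormal basis $\{u_q\}$ and $\gamma_q \geq 0$. By definition, $w$ is (a multiple of) $u_{q'}$ with $\gamma_{q'} = \max_q \gamma_q$.

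Next, I would substitute into the posterior precision:
\begin{equation*}
\posterior^{-1} = \frac{N}{\likelihood}\hat\Gamma + \alpha \I = \sum_{q=1}^P \left(\alpha + \frac{N}{\likelihood}\gamma_q\right) u_q u_q^\top .
\end{equation*}
The identity $\I = \sum_q u_q u_q^\top$ uses the completeness of the basis, and this is exactly where sphericity of the prior enters: a non-isotropic $\alpha$ would destroy the shared eigenbasis. Inverting gives $\posterior = \sum_q \delta_q u_q u_q^\top$ with
\begin{equation*}
\delta_q = \left(\alpha + \frac{N}{\likelihood}\gamma_q\right)^{-1}.
\end{equation*}
This is strictly decreasing in $\gamma_q$ because $\alpha, \likelihood, N > 0$. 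Hence the minimal posterior eigenvalue $\deltamin$ is attained at $u_{q'}$, and we may take $v_{q^*} = u_{q'} \propto w$.

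The main obstacle I expect is non-uniqueness. If the top eigenvalue $\gamma_{\max}$ is repeated, the "first principal component" is only defined up to a choice inside the top eigenspace, and Lemma~\ref{th:underconfident-pc} is stated for a single $v_{q^*}$. I would handle this by noting that the lemma's proof only uses that $v_{q^*}$ is a unit eigenvector of $\posterior$ with eigenvalue $\deltamin$. Any unit vector $w$ in the top eigenspace of $\hat\Gamma$ is such an eigenvector, since all directions in that eigenspace share $\delta = \deltamin$. We can therefore choose the eigenbasis $\{v_q\}$ to contain $w/\|w\|$ and apply the lemma verbatim. The scaling from $w/\|w\|$ to an arbitrary $\tilde x \in \text{span}(w)$ is already covered by the $c^2$ argument in that proof.
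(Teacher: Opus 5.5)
Your proposal is correct and follows essentially the same route as the paper. Sphericity of the prior makes $\posterior$ share an eigenbasis with $X^\top X$, and $\delta_q = (\alpha + \tfrac{N}{\likelihood}\gamma_q)^{-1}$ is strictly decreasing in $\gamma_q$, so the first principal component is $v_{q^*}$ and Lemma~\ref{th:underconfident-pc} applies. Your handling of a repeated top eigenvalue is a valid refinement that the paper's proof passes over: any unit vector in the $\deltamin$-eigenspace works, since $d_p^* = 1/(e_p^\top \posterior^{-1} e_p)$ does not depend on the choice of eigenbasis.
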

\begin{proof}[Proof of Theorem~\ref{th:pc-remark}]
For a spherical prior $\theta \sim N(0, \alpha^{-1}\I)$, the exact posterior covariance $\posterior$ shares its eigenbasis with the Gram matrix $X^\top X$, since $\precisionprior$ commutes with every matrix. Writing this shared eigenbasis as $\{v_q\}_{q=1}^P$ with associated Gram matrix eigenvalues $\{\gamma_q\}_{q=1}^P$, the posterior eigenvalues are
\begin{equation}\label{eq:remark-eigvals}
    \delta_q = \left(\alpha + \frac{\gamma_q}{\likelihood}\right)^{-1}.
\end{equation}
This relationship is strictly decreasing in $\gamma_q$, so the eigenvector associated with the largest Gram matrix eigenvalue is the eigenvector associated with the smallest posterior eigenvalue. The former is, by definition, the first principal component of the training data, $w$, and the latter is $v_{q^*}$ as defined in Lemma~\ref{th:underconfident-pc}, so $w = v_{q^*}$. 

Hence $\text{span}(w) = \text{span}(v_{q^*})$, and any test point $\tilde x \in \text{span}(w)$ satisfies the hypothesis of Lemma~\ref{th:underconfident-pc}. Applying that lemma directly gives
\begin{equation}
    \tilde x^\top S^* \tilde x \;\geq\; \tilde x^\top \posterior \tilde x,
\end{equation}
as claimed.
\end{proof}

\subsection{MFVI under- and overestimates uncertainties similarly}

The easiest way to show Lemma~\ref{th:calibrated-points} is to first show a particular quantity,  $\Tr{\posterior^{-1} S^* }$, is conserved for any exact posterior covariance.  
\begin{lemma}[Conserved Trace Product]
\label{th:conserved-trace}
    For any eigenbasis of the MFVI posterior, the optimal MFVI posterior covariance will obey
    \begin{equation}
        \Tr{\posterior^{-1}S^*} = P.
    \end{equation}
\end{lemma}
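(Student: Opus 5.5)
The plan is to compute the trace directly in the canonical basis, where $S^*$ is diagonal, and then substitute the optimality condition from Lemma~\ref{th:MFVI-hm}. Since $S^* = \sum_{p=1}^P d_p^* e_p e_p^\top$, linearity and cyclicity of the trace give
\begin{equation*}
    \Tr{\posterior^{-1}S^*} = \sum_{p=1}^P d_p^* \Tr{\posterior^{-1} e_p e_p^\top} = \sum_{p=1}^P d_p^* \, e_p^\top \posterior^{-1} e_p .
\end{equation*}
In other words, only the diagonal entries $(\posterior^{-1})_{pp}$ of the exact posterior precision contribute, because $S^*$ has no off-diagonal entries.

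Next I will invoke the first-order optimality condition of Lemma~\ref{th:MFVI-hm}, namely $1/d_p^* = e_p^\top \posterior^{-1} e_p$. Each summand then becomes $d_p^* \cdot (1/d_p^*) = 1$, so the sum equals $P$. Positive definiteness of $\posterior^{-1}$ guarantees $e_p^\top\posterior^{-1}e_p>0$, so each $d_p^*$ is finite and positive and no division issue arises. This observation will also be recorded as the stationarity identity of the objective $F$ from the proof of Lemma~\ref{th:MFVI-hm}.

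On the phrase ``for any eigenbasis of the MFVI posterior'': if one were to allow a general fixed orthonormal basis $\{u_p\}$ in place of $\{e_p\}$, the same computation would go through verbatim. The optimality condition would read $1/d_p^* = u_p^\top\posterior^{-1}u_p$, and the trace would decompose identically. I therefore intend to phrase the argument for a generic orthonormal basis, with the canonical one as the case of interest.

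There is no genuine obstacle here. The only point needing care is to use the stationarity condition exactly rather than the harmonic-mean rewriting, so that no assumption on the $\delta_q$ or on $w_{pq}$ enters. The harmonic-mean form, and Lemma~\ref{th:w-sum-to-one}, will be needed afterwards, when converting $\Tr{\posterior^{-1}S^*}=P$ into Lemma~\ref{th:calibrated-points}. That conversion uses $R(v_q) = \delta_q^{-1} v_q^\top S^* v_q$ and $\sum_q \delta_q^{-1} v_q^\top S^* v_q = \Tr{\posterior^{-1}S^*}$, the latter obtained by evaluating the trace in the eigenbasis $\{v_q\}$.
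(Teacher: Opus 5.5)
Your proof is correct and matches the paper's. Both expand $\Tr{\posterior^{-1}S^*}$ in the canonical (MFVI) basis, reduce each term to $d_p^*\, e_p^\top \posterior^{-1} e_p$, and apply the stationarity condition $1/d_p^* = e_p^\top \posterior^{-1} e_p$ from Lemma~\ref{th:MFVI-hm} to obtain a sum of $P$ ones.
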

% \todo{Move this proof from the main body to the appendix. }
% \todo[color=yellow]{This is interesting because it means that in the KL between true and MFVI posterior, the only "surviving" term is the log(det-ratio)...}
\begin{proof} 
    The trace of any $P \times P$  matrix $A$ can be calculated by the sum of quadratic forms of the orthonormal eigenbasis of the MFVI posterior: $\Tr{A} = \sum_{p=1}^P e_p^\top A e_p$. Applying this to the product of the inverse of the true posterior covariance matrix with the MFVI posterior covariance matrix gives
    \begin{equation}
        \Tr{\posterior^{-1}S^*} = \sum_{p=1}^P e^\top_p  \posterior^{-1} S^* e_p = \sum_{p=1}^P d_p^* \left( e^\top_p  \posterior^{-1} e_p\right) = \sum_{p=1}^P 1 = P. 
    \end{equation}
\end{proof}

From this it is easy to show Lemma~\ref{th:calibrated-points}. 

\begin{lemma*}
[Calibrated MFVI]
    Consider the set of eigenvectors of the true posterior $\{v_q\}_{q=1}^P$. For these points 
    \begin{equation}
     \frac{1}{P}\sum_{q=1}^P R(v_q ) = 1.  
    \end{equation}
\end{lemma*}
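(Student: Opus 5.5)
The plan is to reduce the average of the ratios to a trace and then invoke Lemma~\ref{th:conserved-trace}. The key observation is that at an eigenvector of the exact posterior the denominator of $R$ is explicit. Since $\{v_q\}_{q=1}^P$ is orthonormal and $\posterior v_q = \delta_q v_q$, we have $v_q^\top \posterior v_q = \delta_q$. Consequently
\begin{equation*}
    R(v_q) = \frac{v_q^\top S^* v_q}{\delta_q} = v_q^\top S^* v_q \,\delta_q^{-1}.
\end{equation*}

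Next we rewrite $\delta_q^{-1} v_q$ as $\posterior^{-1} v_q$, which holds because $v_q$ is also an eigenvector of $\posterior^{-1}$ with eigenvalue $\delta_q^{-1}$. This turns each term into a quadratic form:
\begin{equation*}
    R(v_q) = v_q^\top S^* \posterior^{-1} v_q .
\end{equation*}
Summing over $q$, we use the basis-independence of the trace, i.e.\ the fact already used in the proof of Lemma~\ref{th:conserved-trace} that $\Tr{A}=\sum_q z_q^\top A z_q$ for any orthonormal basis $\{z_q\}$. This gives
\begin{equation*}
    \sum_{q=1}^P R(v_q) = \Tr{S^*\posterior^{-1}} = \Tr{\posterior^{-1}S^*},
\end{equation*}
where the last step is the cyclic property of the trace.

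Finally, Lemma~\ref{th:conserved-trace} gives $\Tr{\posterior^{-1}S^*}=P$, and dividing by $P$ yields the claim. There is no real obstacle here. The only point needing care is that the denominator $\delta_q>0$ is nonzero; this holds because $\posterior$ is positive definite. One also has to place $\posterior^{-1}$ on the correct side so the trace identity applies, and either placement works by cyclicity. As an alternative check, one could expand $v_q^\top S^* v_q=\sum_p w_{pq} d_p^*$ and use Lemma~\ref{th:MFVI-hm}:
\begin{equation*}
    \sum_q \sum_p \frac{w_{pq} d_p^*}{\delta_q} = \sum_p d_p^* \sum_q \frac{w_{pq}}{\delta_q} = \sum_p 1 = P.
\end{equation*}
This route is equally short and avoids the trace entirely.
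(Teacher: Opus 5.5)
Your proof is correct and is essentially the paper's: both identify $\sum_q R(v_q)$ with $\mathrm{Tr}(\Sigma^{-1}S^*)$ by evaluating the trace in the eigenbasis $\{v_q\}$ of $\Sigma$ and then apply Lemma~\ref{th:conserved-trace}. The paper places $\Sigma^{-1}$ on the left, using $v_q^\top\Sigma^{-1}=\delta_q^{-1}v_q^\top$, so it needs no cyclicity step; your alternative double-sum check is also valid and amounts to the proof of Lemma~\ref{th:conserved-trace} written out in coordinates.
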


% \begin{lemma}
% [Calibrated MFVI]
%     Consider the set of eigenvectors of the true posterior $\{v_q\}_{q=1}^P$. For these points 
%     \begin{equation}
%      \frac{1}{P}\sum_{q=1}^P R(v_q ) = 1.  
%     \end{equation}
% \end{lemma}

\begin{proof}[Proof of Lemma~\ref{th:calibrated-points}]
For any matrix $A$ and orthonormal basis $\{v_q\}_{q=1}^P$, we have $\mathrm{Tr}(A) = \sum_{q=1}^P v_q^\top  A v_q$. Applying this to $A = \Sigma^{-1} S^*$ gives
\begin{equation}
    \mathrm{Tr}(\Sigma^{-1} S^*) = \sum_{q=1}^P v_q^\top  \Sigma^{-1} S^* v_q.
\end{equation}
Since $v_q$ is an eigenvector of $\Sigma$ with eigenvalue $\delta_q$, we have $v_q^\top  \Sigma^{-1} = \frac{1}{\delta_q} v_q^\top $. Thus
\begin{equation}
    \mathrm{Tr}(\Sigma^{-1} S^*) = \sum_{q=1}^P \frac{v_q^\top  S^* v_q}{\delta_q} = \sum_{q=1}^P \frac{v_q^\top  S^* v_q}{v_q^\top  \Sigma v_q} = \sum_{q=1}^P R(v_q).
\end{equation}
From Lemma~\ref{th:conserved-trace} this equals $P$, and dividing both sides by $P$ gives the result.
\end{proof}

\subsection{MFVI overestimates predictive variance for data with the empirical covariance}

\begin{theorem*}  
For test data points distributed according to $x\sim \hat p(x)$, the difference in the expected predicted variance is given by 
    \begin{equation}
        \expectation{x\sim \hat p (x)}{x^\top \posterior x -  x^\top S^*  x }  \leq 0 . 
    \end{equation}
\end{theorem*}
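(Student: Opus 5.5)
The plan is to turn the expectation into a trace and then use the spherical-prior structure to write the empirical second-moment matrix in terms of $\posterior^{-1}$. Two trace identities already proved, Lemma~\ref{th:conserved-trace} and Lemma~\ref{th:tr-inequality}, should then finish the argument.

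\emph{Step 1 (trace form).} For any matrix $A$, $\expectation{x\sim\hat p(x)}{x^\top A x} = \Tr{A\,\hat\Gamma}$, where $\hat\Gamma = \frac{1}{N}\sum_n x_n x_n^\top = \frac1N X^\top X$. Mean-centring plays no role here, since only the second moment enters. The quantity to bound is therefore $\Tr{(\posterior - S^*)\hat\Gamma}$.

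\emph{Step 2 (express $\hat\Gamma$ through the posterior precision).} With the spherical prior $N(0,\prior)$ assumed in this section, \cref{equ:true_post} gives $\frac{1}{\likelihood}X^\top X = \posterior^{-1} - \precisionprior$. Hence
\[
\hat\Gamma = \frac{\likelihood}{N}\left(\posterior^{-1} - \precisionprior\right).
\]
This is the key substitution. It is exactly where the spherical prior is needed: with a general prior precision the extra term would not be a multiple of the identity, and the last step would fail.

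\emph{Step 3 (evaluate the traces).} Substituting, the difference becomes
\[
\frac{\likelihood}{N}\Big[\Tr{\posterior\posterior^{-1}} - \alpha\Tr{\posterior} - \Tr{S^*\posterior^{-1}} + \alpha\Tr{S^*}\Big].
\]
We have $\Tr{\posterior\posterior^{-1}}=P$, and by Lemma~\ref{th:conserved-trace} (using cyclicity of the trace) $\Tr{S^*\posterior^{-1}} = \Tr{\posterior^{-1}S^*} = P$. These two terms cancel, leaving
\[
\frac{\likelihood\alpha}{N}\big(\Tr{S^*} - \Tr{\posterior}\big).
\]
Lemma~\ref{th:tr-inequality} says this is $\le 0$, because $\likelihood,\alpha,N>0$, which gives the theorem.

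There is no real obstacle once Step 2 is noticed. The only care needed is to check the cyclic-trace step and to state explicitly that the spherical-prior assumption is used. The sign then comes entirely from the AM--HM trace inequality.
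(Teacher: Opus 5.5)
Your proposal is correct and follows the paper's proof essentially step for step. You write both expectations as traces against $\hat\Gamma = \frac{\sigma^2}{N}(\Sigma^{-1} - \alpha\mathbb{I})$, cancel the two $P$ terms using $\mathrm{Tr}(\Sigma^{-1}\Sigma)=P$ and Lemma~\ref{th:conserved-trace}, and take the sign of $\frac{\sigma^2\alpha}{N}\big(\mathrm{Tr}(S^*)-\mathrm{Tr}(\Sigma)\big)$ from Lemma~\ref{th:tr-inequality}. One small aside: your remark that a non-spherical prior makes the last step fail is too strong for diagonal prior precisions $\mathrm{diag}(\lambda_p)$. There the difference becomes $\frac{\sigma^2}{N}\sum_p \lambda_p\,(d_p^* - \Sigma_{pp}) \le 0$, by the same entrywise AM--HM bound of Lemma~\ref{th:MFVI-eb-oc}.
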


\begin{proof}[Proof of Theorem~\ref{th:empirical_pred_var}]
Due to the linearity of the expectation, we can write 
\begin{align}
\expectation{x\sim \hat p (x) }{x^\top  \posterior x - x^\top  S^*  x } = \expectation{x\sim \hat p (x)}{x^\top  \posterior x}  - \expectation{x\sim \hat p (x)}{x^\top  S^*  x },
\end{align}
which allows us to deal with the two expectations separately. 

To prove the result we need to make use of the following identity:
\begin{equation} \label{eq:posterior-gamma}
 \hat \Gamma =\frac{\likelihood}{N} \left(\posterior^{-1} - \alpha \I \right) . 
\end{equation}

By making use of the identity in Equation~\eqref{eq:posterior-gamma}, we can write
\begin{align}
    \expectation{x\sim \hat p (x)}{x^\top  \posterior x } &= \Tr{\hat \Gamma \posterior} \nonumber
    \\ &= \Tr{\frac{\likelihood}{N} \left(\posterior^{-1} - \alpha \I \right) \posterior} \nonumber
    \\ &= \frac{\likelihood}{N} \left( \Tr{\posterior^{-1}\posterior} - \alpha \Tr{\posterior} \right) \nonumber
    \\ &= \frac{\likelihood}{N} \left( \Tr{\I} - \alpha \Tr{\posterior} \right) \nonumber
    \\ &= \frac{\likelihood}{N} \left(P - \alpha \Tr{\posterior}\right).
\end{align}

We can make use of Lemma~\ref{th:conserved-trace} to see
\begin{align}
    \expectation{x\sim \hat p (x)}{x^\top  S^*  x } &= \Tr{\hat \Gamma S^*} \nonumber
     \\ &= \Tr{\frac{\likelihood}{N} \left(\posterior^{-1} - \alpha \I \right) S^*} \nonumber
     \\ &= \frac{\likelihood}{N} \left(\Tr{\posterior^{-1}S^*} - \alpha \Tr{S^*} \right) \nonumber
     \\ &= \frac{\likelihood}{N} \left(P - \alpha \Tr{S^*}\right).
\end{align}

Subtracting one of these from the other gives 
\begin{align}
    \expectation{x\sim \hat p (x)}{x^\top  \posterior x }  - \expectation{x\sim \hat p (x)}{x^\top  S^*  x } = \frac{\likelihood \alpha }{N} \left( \Tr{S^* } - \Tr{\posterior}\right).
\end{align}
As $\frac{\likelihood \alpha }{N} > 0$, this value will take the same sign as the difference in the traces. From Lemma~\ref{th:tr-inequality} we know that 
\begin{equation}
    \Tr{S^* } - \Tr{\posterior} \leq 0, 
\end{equation}
hence the expected posterior predictive variance of the true posterior is less than the expected posterior predictive variance of the MFVI posterior. 
\end{proof}

\subsection{Cold Posteriors}
\label{ap:cold-posterior-proofs}

\paragraph{MFVI posterior rescaling for cold posterior.} Here, we prove that the cold posterior MFVI objective has the form $q_T(\theta) = N(m^*, T S^*)$ as stated in Section~\ref{sec:cold_posterior}, where $m^*$ and $S^*$ are the solutions at $T=1$ given in \cref{eq:mfvi_optimal_mean,eq:mfvi_optimal_cov}. 

\begin{proof}
    In the conjugate case with prior $N(0, \alpha^{-1} \I_P)$ and likelihood $N(y|\theta^\top  x, \likelihood) $, the cold posterior is given by 
    \begin{align}
        p_T(\theta|\data) & \propto \left[\prod_{n=1}^N N(y_n | \theta^\top x_n, \likelihood) \temp \right] N(\theta| 0, \prior)\temp
        \\ &\propto \left[\prod_{n=1}^N \exp\left(-\frac{1}{2\likelihood} \left(y_n - \theta^\top x_n\right)^2 \right) \temp \right] \exp\left(-\frac{\alpha}{2} \theta^2 \right) \temp
        \\ &= \left[\prod_{n=1}^N \exp\left(-\frac{1}{2 T \likelihood} \left(y_n - \theta^\top x_n\right)^2 \right)  \right] \exp\left(-\frac{\alpha}{2T } \theta^2 \right) 
        \\ & \propto \left[\prod_{n=1}^N N(y_n | \theta^\top x_n, T \likelihood) \right] N\left(\theta| 0, \frac{T}{\alpha} \I_P\right). 
    \end{align}
    In other words, this is the exact posterior for a system with a rescaled likelihood and prior. 
    
    Applying the known results for the covariance gives
    \begin{equation}
    \begin{split}
        \posterior_T &= \left(\frac{\alpha}{T } \I_P + \frac{1}{T \likelihood} X^\top X \right)^{-1}
        \\ & = T \left(\alpha \I_P + \frac{1}{ \likelihood} X^\top X \right)^{-1}
        \\ & = T \posterior,
    \end{split}
    \end{equation}
    so the exact cold-posterior covariance is the exact posterior at $T=1$ scaled by a factor T. 

    Applying the known result for the mean gives 
    \begin{equation}
        \mu_T = \posterior_T \left(\frac{1}{T \likelihood} X^\top Y \right)
        = \frac{1}{\likelihood} \posterior X^\top Y = m,
    \end{equation}
    so applying the exponentiation does not change the mean of the posterior. 

    From here it is straightforward to apply the optimal variational parameter arguments to get 
    \begin{equation}
        m_T  = m = \mu \; \forall \;  T, 
    \end{equation}
    so the change in temperature does not effect the mean values. 
    
    Similarly, 
    \begin{equation}
        d_{pT}= \frac{1}{e_p^\top  \frac{1}{T} \posterior^{-1} e_p } = \frac{T}{e_p^\top \posterior^{-1} e_p} = T d_p,
    \end{equation}
    so 
    \begin{equation}
        S_T = T S^*. 
    \end{equation}
\end{proof}

\subsection{Pathological example: Low Rank Linear Regression}
\label{sec:low-rank-proofs}

\begin{proposition*} 
    As $P \to \infty$, for any test input $x\in \R^P$ the posterior predictive variance of the MFVI posterior will be given by 
    \begin{equation}
        x^\top  S^* x = \alpha^{-1} || x ||_2^2,
    \end{equation}
    where $\alpha$ is the prior precision. 
\end{proposition*}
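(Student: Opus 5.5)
The plan is to compute $S^*$ explicitly in the rank-1 setting using Lemma~\ref{th:MFVI-hm}, and then let $P\to\infty$.

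\textbf{Posterior spectrum.} The training inputs are $x_n=\varepsilon_n \mathbf{1}_P$, so $X^\top X = c\,\mathbf{1}_P\mathbf{1}_P^\top$ with $c=\sum_n \varepsilon_n^2$. This matrix is rank one, with unit eigenvector $v_1=\mathbf{1}_P/\sqrt{P}$ and eigenvalue $\gamma_1 = cP$. By the proof of Theorem~\ref{th:pc-remark}, the posterior eigenvalues are therefore
\[
\delta_1=\left(\alpha+\frac{cP}{\likelihood}\right)^{-1}, \qquad \delta_q=\alpha^{-1} \quad \text{for } q\ge 2.
\]

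\textbf{MFVI variances.} Since $w_{p1}=(e_p^\top v_1)^2=1/P$ for every $p$, Lemma~\ref{th:w-sum-to-one} gives $\sum_{q\ge2}w_{pq}=1-1/P$. Lemma~\ref{th:MFVI-hm} then yields
\[
\frac{1}{d_p^*}=\frac{1}{P}\left(\alpha+\frac{cP}{\likelihood}\right)+\left(1-\frac{1}{P}\right)\alpha=\alpha+\frac{c}{\likelihood},
\]
which is the same for every $p$. Hence $S^*=(\alpha+c/\likelihood)^{-1}\I_P$ exactly, and $x^\top S^*x=\|x\|_2^2/(\alpha+c/\likelihood)$ for any $x$.

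\textbf{The limit, which is the main obstacle.} The conclusion requires $c/\likelihood\to0$ as $P\to\infty$. Taken literally, $c$ does not depend on $P$, so this would fail. The resolution must be that the per-coordinate data scale shrinks with dimension. I would adopt the normalisation implicit in the experiments: the inputs have fixed total norm, i.e.\ $x=\varepsilon\mathbf{1}_P/\sqrt{P}$, matching the unit-norm test point $\mathbf{1}_P/\sqrt{P}$. Then $X^\top X = c\,v_1v_1^\top$, so $\delta_1=(\alpha+c/\likelihood)^{-1}$ stays fixed and the exact posterior remains informative along the data direction. With this normalisation $w_{p1}=1/P$ gives
\[
\frac{1}{d_p^*}=\alpha+\frac{c}{P\likelihood}\to\alpha .
\]
Equivalently, the harmonic mean in Lemma~\ref{th:MFVI-hm} is dominated by the $P-1$ prior eigenvalues. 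Consequently $x^\top S^* x=\|x\|_2^2\big/\bigl(\alpha+c/(P\likelihood)\bigr)\to\alpha^{-1}\|x\|_2^2$, and the limit is uniform over test points of bounded norm. I would state explicitly that the limit is taken with the total signal strength $c$ held fixed, since this fixes the scaling of $\beta$ with $P$.
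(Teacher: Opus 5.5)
Your argument is correct and is essentially the paper's proof. Both apply Lemma~\ref{th:MFVI-hm} with $w_{p1}=1/P$ and let the $P-1$ prior eigenvalues $\alpha^{-1}$ dominate the weighted harmonic mean; your use of $\sum_{q\ge 2}w_{pq}=1-1/P$, and your observation that $S^*$ is a scalar multiple of $\I_P$ so the claim holds for every $x$, tidy up the paper's ``all $w_{pq}=1/P$ by symmetry'' step. Your scaling caveat is also well founded: the paper sends $1/(P\delta_{\min})\to 0$ while silently treating $\delta_{\min}$ as independent of $P$, but for the literal inputs $\varepsilon\mathbf{1}_P$ with fixed $\beta$ one has $P\delta_{\min}\to\sigma^2/c$, and indeed $S^*=(\alpha+c/\sigma^2)^{-1}\I_P$ for every $P$, so your normalisation (equivalently $\beta\propto 1/P$) is exactly the hypothesis the paper's argument needs.
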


\begin{proof}[Proof of Proposition~\ref{th:toy_mfvi_prior_pred}]
    
    We start this proof by considering the symmetries,  thereby giving us the weights which relate the eigenvalues of the exact posterior to the MFVI posterior. 
    As the relation between $1_P$ and each canonical basis function is identical, so it is clear that all values of $w_{pq}$ are the same. Combining this with the sum-to-one requirement gives $w_{pq} = \frac{1}{P} \; \forall \; p, q $. 

    Next we note that in the exact posterior all of the eigenvalues will be $\alpha^{-1}$, apart from the one associated with the eigenvector which points towards the span of the data, which we will denote by $\deltamin$. 

    We can then consider the predicted variance which would be given by the normalised test point $\frac{x}{||x||_2^2}$ this into the predicted variance gives  
    \begin{align}
        \frac{x^\top S^* x}{||x||_2^2} &= \sum_{p=1}^P \frac{w_{pq}}{\sum_{q=1}^P \frac{w_{pq}}{\delta_q}}
        \\ & = \sum_{p=1}^P \frac{\frac{1}{P}}{\sum_{q=1}^P \frac{1}{P \delta_q}}
        \\ & = \sum_{p=1}^P \frac{\frac{1}{P}}{\frac{1}{P \deltamin} + (P-1)\alpha}
        \\ & = \frac{1}{\frac{1}{P \deltamin} + \frac{(P-1)\alpha}{P}}
    \end{align}

    % \begin{equation}
    %     x^\top  S^* x= \frac{1}{\frac{1}{P \deltamin} + \frac{(P-1)\alpha}{P}}.
    % \end{equation}

    Inspecting the denominator 
    \begin{equation}
        \frac{1}{P \deltamin} + \frac{(P-1)\alpha}{P} = \frac{1}{P \deltamin} - \frac{\alpha}{P} + \alpha.
    \end{equation}
    As $P\to \infty$ the first two terms go to zero, leaving $\alpha$. Applying the limit quotient rule we see 
    \begin{equation}
        \frac{x^\top S^* x}{||x||_2^2} = \alpha^{-1}
    \end{equation}
and hence 
    
\begin{equation}
        x^\top  S^* x= \alpha^{-1} ||x||_2^2.
    \end{equation}
\end{proof}

\section{Experiment Details}\label[appendix]{app:impl_dets}

\paragraph{Pre-processing and Train-Test Split.} Given a full data set $D = (X, Y)$ as defined in \cref{sec:theory}, we make a split into a train, ID test and OOD test set as follows. First, the OOD test set is formed by sorting and then splitting the data on a chosen feature. The remainder is randomly split into a train and an ID test set. All inputs are then standardized using the train set mean and standard deviation, and the outputs are all centred with the train set output mean. The random splits are independent across trials, giving the desired variability in train-test splits, as well as learned hyperparameters (see below).

\paragraph{Basis functions.} All experiments use RBF basis functions to construct the feature map, $\varphi : \mathbb{R}^P \mapsto \mathbb{R}^Q$, with 
\begin{align}
    \varphi(x) = \Big(e^{-\frac{\| x-c_q \|^2_2}{2l}} \Big)_{q=1}^Q,
\end{align}
for basis centroids $\{c_q\}_{q=1}^Q$ where $c_q \in \mathbb{R}^P$ and a common lengthscale, $l$. The centroids are sampled independently as
\begin{align}
    c_q \sim \mathcal{N}(0_P, M),
\end{align}
with $M = \frac{1}{n_{train}} X_{train}^\top X_{train}$, the empirical covariance of the standardized training inputs. We denote the inputs, $X$, mapped to this basis by $\Phi_l$, where the subscript makes explicit the dependence on the learnable lengthscale.

\paragraph{Learning hyperparameters.} The three learnable hyperparameters in our experiments are the observation noise, $\sigma$, the prior precision, $\alpha$, and the RBF lengthscale, $l$. They are chosen by gradient-based optimization of the train set marginal likelihood, given by
\begin{align}
    p(Y_{train}\mid X_{train}) = \mathcal{N}(0_{n_{train}}, \alpha^{-1}\Phi_{l, train} \Phi_{l, train}^\top + \sigma^2I_{n_{train}}),
\end{align}
and, for numerical stability, we minimise the negative log likelihood,
\begin{align}
    \mathcal{L}(\sigma, \alpha, l) = - \log p(Y_{train}\mid X_{train}).
\end{align}
To this end, we use the ADAM optimizer \citep{kingma2015adam} with default settings for a fixed 2000 gradient steps. 

\paragraph{Posterior predictive.} Both the true and $T$-MFVI posterior predictive considered in our experiments are analytically tractable in the conjugate BLR setting. We distinguish the joint posterior predictive for a test set, $(X, Y)$ and the marginal one, for a test point, $(x,y)$.  The marginal true posterior predictive at an input, $x$, is given by
\begin{align}
    p(y \mid x, \mathcal{D}) &= \int p(y \mid \theta, x)\  p(\theta \mid \mathcal{D}) \ d\theta \\
    & = \mathcal{N}(y ; \mu^\top x, x^\top \Sigma x + \sigma^2 ),
\end{align}
with $\mu$ and $\Sigma$ from \cref{equ:true_post}. The marginal $T$-MFVI posterior predictive is 
\begin{align}
    q_T(y \mid x, \mathcal{D}) &= \int p(y \mid \theta, x)\  q_T(\theta \mid \mathcal{D}) \ d\theta \\
    & = \mathcal{N}(y ; {m^*}^\top x, Tx^\top S^* x + \sigma^2 ),
\end{align}
with $m^*$ and $S^*$ from \cref{eq:mfvi_optimal_mean,eq:mfvi_optimal_cov}.
For the joint posterior predictives, we have 
\begin{align}
    p(Y \mid X, \mathcal{D}) &= \int p(Y \mid \theta, X)\  p(\theta \mid \mathcal{D}) \ d\theta \\
    & = \mathcal{N}(Y ; X\mu, X\Sigma X^\top + \sigma^2 I ),
\end{align}
and
\begin{align}
    q_T(Y \mid X, \mathcal{D}) &= \int p(Y \mid \theta, X)\  q_T(\theta \mid \mathcal{D}) \ d\theta \\
    & = \mathcal{N}(y ; Xm^*, TXS^*X^\top + \sigma^2 I).
\end{align}

\paragraph{Divergences.} A number of divergences are considered in \cref{fig:opt_T}, with the aim of quantifying how close the $T$-MFVI posterior predictive is to the true posterior predictive. While we write the divergences for the marginal posterior predictives here, they straightforwardly extend to the joint predictive setting. We use both the forward and reverse Kullback-Leibler (KL) divergence,
\begin{align}
    D_{KL}(p \|q_T) &= \int p(y, \mid x, \mathcal{D}) \log \frac{p(y, \mid x, \mathcal{D})}{q_T(y, \mid x, \mathcal{D})}dy, \\
    D_{KL}(q_T \|p) &= \int q_T(y, \mid x, \mathcal{D}) \log \frac{q_T(y, \mid x, \mathcal{D})}{p(y, \mid x, \mathcal{D})}dy. \\
\end{align}

We also consider the standard $\alpha$-divergence at $\alpha=0.5$,
\begin{align}
    D_\alpha(p, q_T) =  4 \Bigg(1- \int\sqrt{p(y, \mid x, D)q_T(y, \mid x, D)} \ dy\Bigg),
\end{align}
making it proportional to Hellingers distance, and thus symmetric. Further, we make use of the 2-Wasserstein distance, which is analytic for two Gaussians and given by
\begin{align}
    W_2(p, q_T) = \inf_{\gamma \in \Gamma(p, q_T)}  \mathbb{E}_{(y, y') \sim \gamma}[\|y - y' \|^2]^{\frac{1}{2}},
\end{align}

with $\Gamma(p, q_T)$ the set of all couplings of the two posterior predictive distributions. 
Lastly, for comparing marginal predictives, we consider the simple squared difference between predictive variances
\begin{align}
    (\sigma_p^2 - \sigma_{q_T}^2)^2 &\equiv (x^\top S^* x + \sigma^2 -x^\top  \Sigma x - \sigma^2)^2 \\
    & =  (x^\top S^* x  -x^\top \Sigma x)^2,
\end{align}
which we generalize to the squared Frobenius norm
\begin{align}
    \|\Sigma_p - \Sigma_{q_T}\|_F^2 & \equiv \| X\Sigma X^\top + \sigma^2 I - TXS^*X^\top - \sigma^2 I\|_F^2 \\
    &= \| X\Sigma X^\top - TXS^*X^\top \|_F^2, 
\end{align}
for comparing joint predictives.

When considering the marginal posterior predictives for a given divergence, the marginal divergences are simply averaged across test points, reflecting a marginal per-data-point predictive divergence. For the joint posterior predictves, each test set simply gives one divergence between the two multivariate Gaussians. This is done for a grid of $100$ values $T$ (equally spaced on the log scale), and $T^*$ in \cref{fig:opt_T} is then found as the divergence minimiser on this grid.

\section{Additional Experiment Results}\label[appendix]{app:more_res}

Here we present experimental results following the set up of \Cref{app:impl_dets} and \cref{sec:experiments}, for additional UCI data sets. Analogous to \Cref{fig:opt_T}, \Cref{app:fig:opt_T1,app:fig:opt_T2} show the temperature minimising various measures of divergence between the true posterior predictive and the $T-$MFVI posterior predictive. % From these we see how, qualitatively, the pattern observed in \Cref{fig:opt_T} is present in all data sets. OOD requires higher temperature ... 

\begin{figure}[t]
  \centering
  \setlength{\tabcolsep}{2pt} % horizontal padding between images
  \begin{tabular}{cc}
    \includegraphics[width=0.49\textwidth]{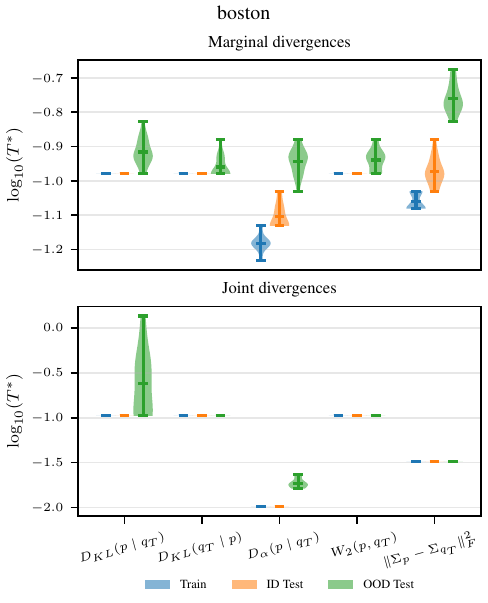} &
    \includegraphics[width=0.49\textwidth]{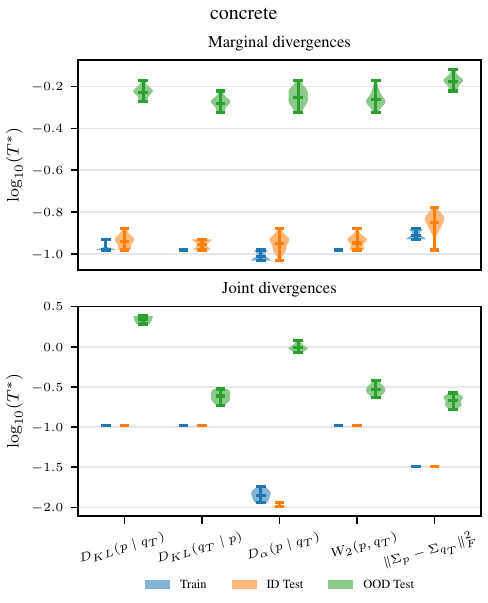} \\
    \includegraphics[width=0.49\textwidth]{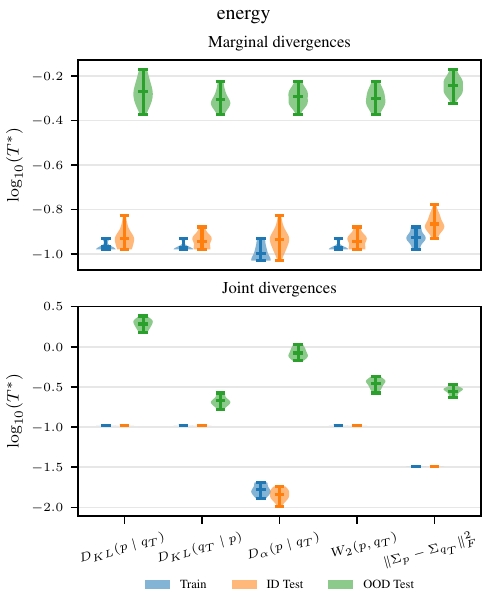} &
    \includegraphics[width=0.49\textwidth]{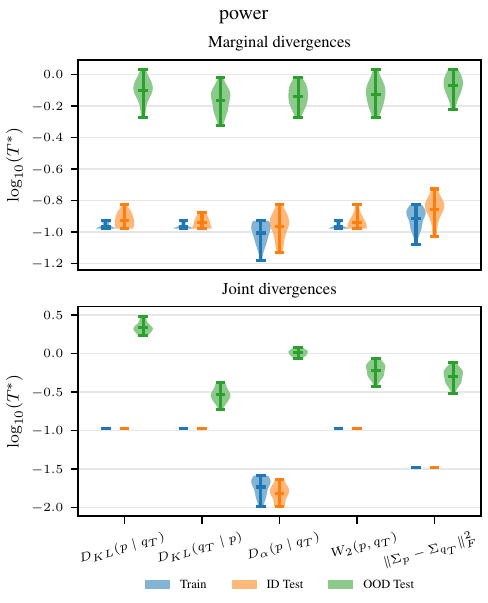} \\
  \end{tabular}
  \caption{Optimal temperatures for various measures of divergence are shown across independent train-test splits.}
  \label{app:fig:opt_T1}
\end{figure}

\begin{figure}[t]
  \centering
  \setlength{\tabcolsep}{2pt} % horizontal padding between images
  \begin{tabular}{cc}
    \includegraphics[width=0.49\textwidth]{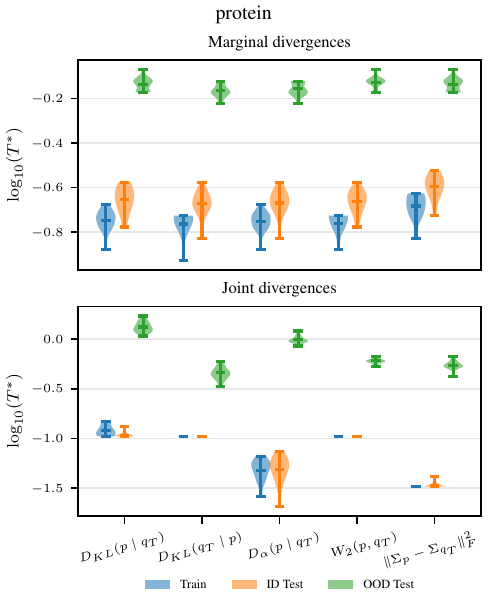} &
    \includegraphics[width=0.49\textwidth]{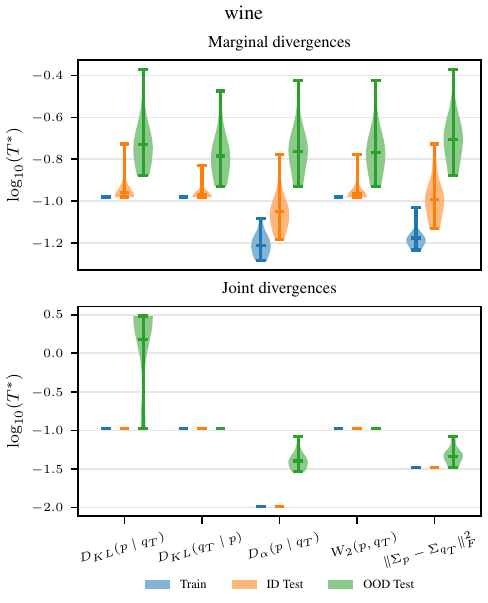} \\
    \includegraphics[width=0.49\textwidth]{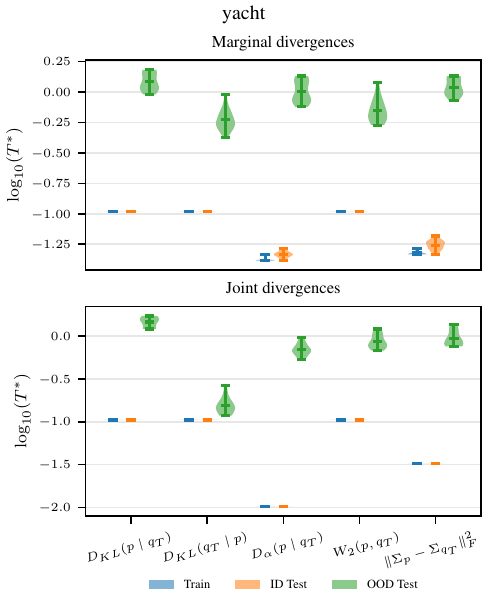} &
    \includegraphics[width=0.49\textwidth]{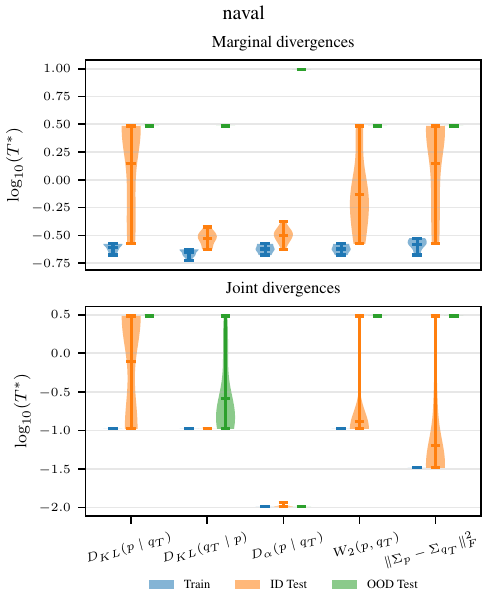} \\
  \end{tabular}
  \caption{Optimal temperatures for various measures of divergence are shown across independent train-test splits.}
  \label{app:fig:opt_T2}
\end{figure}

\end{document}